\documentclass[10pt]{article}

\usepackage{jmlr2e}
\usepackage[utf8]{inputenc} 
\usepackage[T1]{fontenc}    
\usepackage{hyperref}       
\usepackage{url}            
\usepackage{booktabs}       
\usepackage{amsfonts}       
\usepackage{nicefrac}       
\usepackage{microtype}      
\usepackage{amsmath,dsfont,empheq}
\usepackage{algorithm}
\usepackage{algpseudocode}
\usepackage{syntonly,bm,wrapfig,epsfig}
\usepackage{xcolor}
\input{mysymbol.sty}
\usepackage[bf]{caption} 

\usepackage{multibib}
\usepackage{graphicx}
\newcites{apndx}{References}

\algrenewcommand{\algorithmiccomment}[1]{\hskip0em$\triangleright$ #1}

\title{LAG: Lazily Aggregated Gradient for\\ Communication-Efficient Distributed Learning}

%

\author{
Tianyi Chen$^{\star}$ \qquad Georgios B. Giannakis$^{\star}$ \qquad  Tao Sun$^{\dag,\ast}$ \qquad Wotao Yin$^{\ast}$\\\\
{\centering
\addr $^{\star}$\textit{University of Minnesota - Twin Cities, Minneapolis, MN 55455, USA} \\
$^{\dag}$\textit{National University of Defense Technology, Changsha, Hunan 410073, China}\\
$^{\ast}$\textit{University of California - Los Angeles, Los Angeles, CA 90095, USA}\\
\email \{\texttt{chen3827,georgios\}@umn.edu}\qquad~~~\texttt{nudtsuntao@163.com}\qquad~~~ \texttt{wotaoyin@math.ucla.edu}\qquad~~}
}


\ShortHeadings{LAG: Lazily Aggregated Gradient for Communication-Efficient Distributed Learning}{Chen and Giannakis and Sun and Yin}
\firstpageno{1}

\begin{document}

%
%

\maketitle

\thispagestyle{empty}
\begin{abstract}
This paper presents a new class of gradient methods for distributed machine learning that adaptively skip the gradient calculations to learn with reduced communication and computation.
Simple rules are designed to detect slowly-varying gradients and, therefore, trigger the reuse of outdated gradients.
The resultant gradient-based algorithms are termed \textbf{L}azily \textbf{A}ggregated \textbf{G}radient --- justifying our acronym \textbf{LAG} used henceforth.
Theoretically, the merits of this contribution are: i) the convergence rate is the same as batch gradient descent in strongly-convex, convex, and nonconvex smooth cases;
and, ii) if the distributed datasets are heterogeneous (quantified by certain measurable constants), the communication rounds needed to achieve a targeted accuracy are reduced thanks to the adaptive reuse of \emph{lagged} gradients.
Numerical experiments on both synthetic and real data corroborate a significant communication reduction compared to alternatives.
\end{abstract}

\section{Introduction}

In this paper, we develop communication-efficient algorithms to solve the following problem
\begin{align}\label{opt0}
	\min_{\bbtheta\in \mathbb{R}^d}~{\cal L}(\bbtheta)~~~{\rm with}~~~{\cal L}(\bbtheta):=\sum_{m\in{\cal M}}{\cal L}_m(\bbtheta)
\end{align}
where $\bbtheta\in \mathbb{R}^d$ is the unknown vector, ${\cal L}$ and $\{{\cal L}_m, m\!\in\!{\cal M}\}$ are smooth (but not necessarily convex) functions with ${\cal M}:=\{1,\ldots,M\}$.
Problem \eqref{opt0} naturally arises in a number of areas, such as multi-agent optimization \citep{nedic2009}, distributed signal processing \citep{gg2016,schizas2008}, and distributed machine learning \citep{dean2012}.
Considering the distributed machine learning paradigm, each ${\cal L}_m$ is also a sum of functions, e.g., ${\cal L}_m(\bbtheta)\!:=\!\sum_{n\in{\cal N}_m}\!\!\ell_n(\bbtheta)$, where $\ell_n$ is the loss function (e.g., square or the logistic loss) with respect to the vector $\bbtheta$ (describing the model) evaluated at the training sample $\bbx_n$; that is, $\ell_n(\bbtheta):=\ell(\bbtheta;\bbx_n)$.
While machine learning tasks are traditionally carried out at a single server, for datasets with massive samples $\{\bbx_n\}$, running gradient-based iterative algorithms at a single server can be prohibitively slow; e.g., the server needs to sequentially compute gradient components given limited processors.
A simple yet popular solution in recent years is to parallelize the training across multiple computing units (a.k.a. workers) \citep{dean2012}.
Specifically, assuming batch samples distributedly stored in a total of $M$ workers with the worker $m\in{\cal M}$ associated with samples $\{\bbx_n,\,n\in {\cal N}_m\}$, a globally shared model $\bbtheta$ will be updated at the central server by aggregating gradients computed by workers.
Due to bandwidth and privacy concerns, each worker $m$ will not upload its data $\{\bbx_n, n\in{\cal N}_m\}$ to the server, thus the learning task needs to be performed by iteratively communicating with the server.

We are particularly interested in the scenarios where communication between the central server and the local workers is costly, as is the case with the Federated Learning paradigm \citep{mcmahan2017,smith2017}, and the cloud-edge AI systems \citep{stoica2017}.
In those cases, communication latency is the bottleneck of overall performance.
More precisely, the communication latency is a result of initiating communication links, queueing and propagating the message.
For sending small messages, e.g., the $d$-dimensional model $\bbtheta$ or aggregated gradient, this latency dominates the message size-dependent transmission latency.
Therefore, it is important to reduce the number of communication rounds, even more so than the bits per round.
In short, \textbf{our goal} is to find $\bbtheta$ that minimizes \eqref{opt0} using as low communication overhead as possible.

\subsection{Prior art}
To put our work in context, we review prior contributions that we group in two categories.

\paragraph{Large-scale machine learning.}
Solving \eqref{opt0} at a single server has been extensively studied for large-scale learning tasks, where the ``workhorse approach'' is the simple yet efficient stochastic gradient descent (SGD) \citep{robbins1951stochastic,bottou2010,bottou2016}.
For learning beyond a single server, distributed parallel machine learning is an attractive solution to tackle large-scale learning tasks, where the parameter server architecture is the most commonly used one \citep{dean2012,li2014}.
Different from the single server case, parallel implementation of the batch gradient descent (GD) is a popular choice, since SGD that has low complexity per iteration requires a large number of iterations thus communication rounds \citep{mcmahan2017blog}.
For traditional parallel learning algorithms however, latency, bandwidth limits, and unexpected drain on resources, that delay the update of even a single worker will slow down the entire system operation.
Recent research efforts in this line have been centered on understanding asynchronous-parallel algorithms to speed up machine learning by eliminating costly synchronization; e.g.,  \citep{cannelli2016,sun2017,peng2016,recht2011,liu2015jmlr}.

\paragraph{Communication-efficient learning.}
Going beyond single-server learning, the high communication overhead becomes the bottleneck of the overall system performance \citep{mcmahan2017blog}.
Communication-efficient learning algorithms have gained popularity \citep{jordan2018,zhang2013}.
Distributed learning approaches have been developed based on quantized (gradient) information, e.g., \citep{suresh2017}, but they only reduce the required bandwidth per communication, not the rounds.
For machine learning tasks where the loss function is convex and its conjugate dual is expressible, the dual coordinate ascent-based approaches have been demonstrated to yield impressive empirical performance \citep{smith2017,jaggi2014,ma2017}. But these algorithms run in a double-loop manner, and the communication reduction has not been formally quantified.
To reduce communication by accelerating convergence, approaches leveraging (inexact) second-order information have been studied in \citep{shamir2014,zhang2015icml}.
Roughly speaking, algorithms in \citep{smith2017,jaggi2014,ma2017,shamir2014,zhang2015icml} reduce communication by increasing local computation (relative to GD), while our method does not increase local computation.
In settings \emph{different} from the one considered in this paper, communication-efficient approaches have been recently studied with triggered communication protocols \citep{liu2017,lan2017}.
Except for convergence guarantees however, no theoretical justification for communication reduction has been established in \citep{liu2017}.
While a sublinear convergence rate can be achieved by algorithms in \citep{lan2017}, the proposed gradient selection rule is nonadaptive and requires double-loop iterations.

\subsection{Our contributions}
Before introducing our approach, we revisit the popular GD method for \eqref{opt0} in the setting of one parameter server and $M$ workers: At iteration $k$, the server broadcasts the current model $\bbtheta^k$ to \emph{all} the workers; every worker $m\in{\cal M}$ computes $\nabla{\cal L}_m\big(\bbtheta^k\big)$ and uploads it to the server; and once receiving gradients from all workers, the server updates the model parameters via
\begin{flalign}\label{eq.gd1}
&{\rm \textbf{GD iteration}}\qquad\qquad\qquad\bbtheta^{k+1}=\bbtheta^k-\alpha\nabla_{\rm GD}^k~~~{\rm with}~~~\nabla_{\rm GD}^k:=\! \sum_{m\in{\cal M}}\nabla{\cal L}_m\big(\bbtheta^k\big)&
\end{flalign}
where $\alpha$ is a stepsize, and $\nabla_{\rm GD}^k$ is an aggregated gradient that summarizes the model change.
To implement \eqref{eq.gd1}, the server has to communicate with \emph{all} workers to obtain fresh $\{\nabla{\cal L}_m\big(\bbtheta^k\big)\}$.

In this context, the present paper puts forward a new batch gradient method (as simple as GD) that can \emph{skip} communication at certain rounds, which justifies the term \textbf{L}azily \textbf{A}ggregated \textbf{G}radient (\textbf{LAG}). With its derivations deferred to Section \ref{sec.tgas}, LAG resembles \eqref{eq.gd1}, given by
\begin{flalign}\label{eq.LAG1-0}
&{\rm \textbf{LAG iteration}}\qquad\qquad\quad~~\bbtheta^{k+1}=\bbtheta^k-\alpha\nabla^k~~~~{\rm with}~~~~\nabla^k:=\! \sum_{m\in{\cal M}}\nabla{\cal L}_m\big(\hat{\bbtheta}_m^k\big)&
\end{flalign}
where each $\nabla{\cal L}_m(\hat{\bbtheta}_m^k)$ is either $\nabla{\cal L}_m(\bbtheta^k)$, when $\hat{\bbtheta}_m^k=\bbtheta^k$, or an outdated gradient that has been computed using an old copy $\hat{\bbtheta}_m^k\neq \bbtheta^k$.
Instead of requesting fresh gradient from every worker in \eqref{eq.gd1}, the twist is to obtain $\nabla^k$ by refining the previous aggregated gradient $\nabla^{k-1}$; that is, using only the new gradients from the \emph{selected} workers in ${\cal M}^k$, while reusing the outdated gradients from the rest of workers.
Therefore, with $\hat{\bbtheta}_m^k\!:=\!\bbtheta^k,\,\forall m\!\in\!{\cal M}^k,~\hat{\bbtheta}_m^k\!:=\!\hat{\bbtheta}_m^{k-1}\!\!,\,\forall m\!\notin\!{\cal M}^k$, LAG in \eqref{eq.LAG1-0} is equivalent to
\begin{flalign}\label{eq.LAG1}
&{\rm \textbf{LAG iteration}}\qquad\qquad\quad~~\bbtheta^{k+1}=\bbtheta^k-\alpha\nabla^k~~~{\rm with}~~~\nabla^k\!=\!\nabla^{k-1}\!+\!\!\!\sum_{m\in{\cal M}^k}\delta\nabla^k_m&
\end{flalign}
where $\delta\nabla^k_m:=\nabla{\cal L}_m(\bbtheta^k)\!-\!\nabla{\cal L}_m(\hat{\bbtheta}_m^{k-1})$ is the difference between two evaluations of $\nabla{\cal L}_m$ at the current iterate $\bbtheta^k$ and the old copy $\hat{\bbtheta}_m^{k-1}$.
If $\nabla^{k-1}$ is stored in the server, this simple modification scales down the number of communication rounds per iteration from GD's $M$ to LAG's $|{\cal M}^k|$.

We develop two different rules to select ${\cal M}^k$. The first rule is adopted by the parameter server (PS), and the second one by every worker (WK). At iteration $k$,

\vspace{0.1cm}
\noindent\textbf{LAG-PS}: the server determines ${\cal M}^k$ and sends $\bbtheta^k$ to the workers in ${\cal M}^k$; each worker $m\!\in\!{\cal M}^k$ computes $\nabla\!{\cal L}_m(\bbtheta^k)$ and uploads $\delta\nabla^k_m$; workers in ${\cal M}^k$ do nothing; the server updates via \eqref{eq.LAG1};

\vspace{0.1cm}
\noindent\textbf{LAG-WK}: the server broadcasts $\bbtheta^k$ to all workers; every worker computes $\nabla{\cal L}_m(\bbtheta^k)$, and checks if it belongs to ${\cal M}^k$; only the workers in ${\cal M}^k$ upload $\delta\nabla^k_m$; the server updates via \eqref{eq.LAG1}.
\vspace{0.1cm}

See a comparison of two LAG variants with GD in Table \ref{tab:3algcomp}.

\begin{wrapfigure}{r}{3.1in}
\vspace{-0.5cm}
\def\epsfsize#1#2{0.35#1}
\centerline{\epsffile{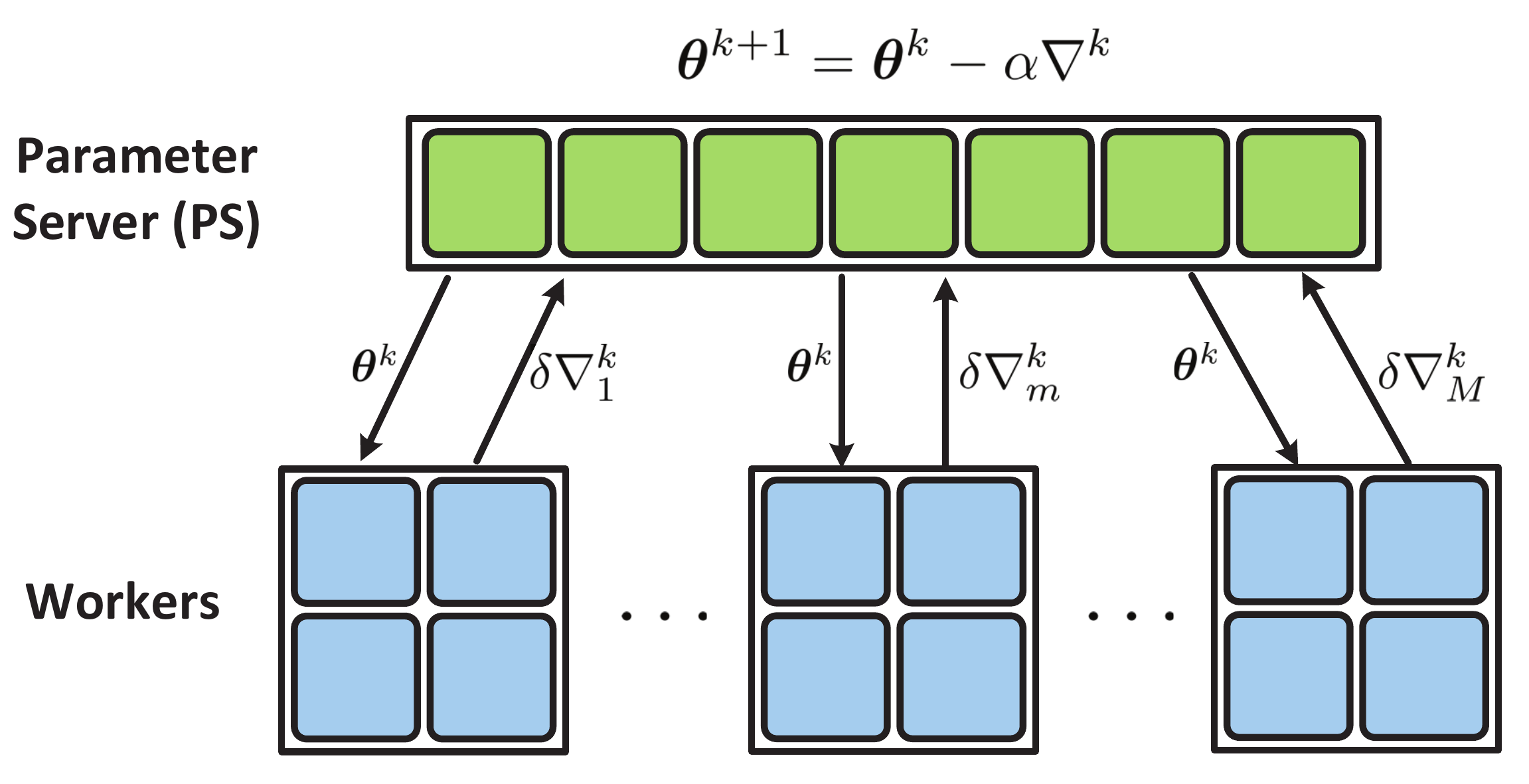}}
\vspace*{-6pt}
  \caption{LAG in a parameter server setup.}
\label{fig:pswk-diag}
\vspace{-0.3cm}
\end{wrapfigure}
Naively reusing outdated gradients, while saving communication per iteration, can increase the total number of iterations.
To keep this number in control, we judiciously design our simple trigger rules so that LAG can: i) achieve the \emph{same} order of convergence rates (thus iteration complexities) as batch GD under strongly-convex, convex, and nonconvex smooth cases; and, ii) require \emph{reduced} communication to achieve a targeted learning accuracy, when the distributed datasets are heterogeneous (measured by certain quantity specified later).
In certain learning settings, LAG requires only ${\cal O}(1/M)$ communication of GD.
 Empirically, we found that LAG can reduce the communication required by GD and other distributed parallel learning methods by several orders of magnitude.

\vspace{0.1cm}
\noindent\textbf{Notation}. Bold lowercase letters denote
column vectors, which are transposed by $(\cdot)^{\top}$. And $\|\mathbf{x}\|$ denotes the $\ell_2$-norm of $\mathbf{x}$. Inequalities for vectors $\mathbf{x} > \mathbf{0}$ is defined entrywise.

 \begin{table}
\small
\vspace{-0.2cm}
\centering
 \begin{tabular}[t]{ c || c | c ||c |c|| c |c  }
 \hline\hline \textbf{Metric} &\multicolumn{2}{|c||}{\textbf{Communication}}& \multicolumn{2}{|c||}{\textbf{Computation}}& \multicolumn{2}{|c}{\textbf{Memory}}\\
\hline \!\textbf{Algorithm}\!&\textbf{PS$\rightarrow$WK}~$m$\!\!&\!\textbf{WK} $m$ $\rightarrow$\textbf{PS}\!& ~~~~\textbf{PS}~~~~~&~~\textbf{WK} $m$~~~~& ~~~~\textbf{PS}~~~~&~\textbf{WK}~$m$\\ \hline \hline
    \textbf{GD} & $\bbtheta^k$ &  $\nabla {\cal L}_m$ & \eqref{eq.gd1} & $\nabla {\cal L}_m$ &  $\bbtheta^k$& $/$ \\ \hline
     \textbf{LAG-PS} & \!\!\!$\bbtheta^k$\!\!, if $m\!\in\!{\cal M}^k$\!\!\!\! &\!\!\!$\delta\nabla^k_m$, if $m\!\in\!{\cal M}^k$\!\!\!  & \!\!\! \eqref{eq.LAG1}, \eqref{eq.trig-cond2}\!\! & \!\!\!$\nabla\! {\cal L}_m$, if $m\!\in\!{\cal M}^k$\!\!\! &  \!\!\!\! $\bbtheta^k\!,\nabla^k\!,\{\hat{\bbtheta}_m^k\}$\!\!\!& \!\!\!\!\!\! $\nabla {\cal L}_m(\hat{\bbtheta}_m^k)$\!\!\!\!\!  \\\hline
    \textbf{LAG-WK} & $\bbtheta^k$ & \!\!\!$\delta\nabla^k_m$, if $m\!\in\!{\cal M}^k$\!\!\! & \eqref{eq.LAG1}  &\!\!\!$\nabla {\cal L}_m,\eqref{eq.trig-cond1}$\!\!\!&  $\bbtheta^k,\nabla^k$&\!\!\!\!\! $\nabla {\cal L}_m(\hat{\bbtheta}_m^k)$\!\!\!\!\\ \hline\hline
    \end{tabular}
    \vspace{-0.1cm}
\caption{A comparison of communication, computation and memory requirements.
\textbf{PS} denotes the parameter server, \textbf{WK} denotes the worker,
\textbf{PS$\rightarrow$WK $m$} is the communication link from the server to the worker $m$, and \textbf{WK} $m$ $\rightarrow$ \textbf{PS} is the communication link from the worker $m$ to the server.}
\label{tab:3algcomp}
\vspace{-0.2cm}
\end{table}

\section{LAG: Lazily Aggregated Gradient Approach}
\label{sec.tgas}

In this section, we formally develop our LAG method, and present the intuition and basic principles behind its design.
The original idea of LAG comes from a simple rewriting of the GD iteration \eqref{eq.gd1} as
\begin{equation}\label{eq.gd2}
	\bbtheta^{k+1}=\bbtheta^k-\alpha \sum_{m\in{\cal M}}\nabla{\cal L}_m(\bbtheta^{k-1}) -\alpha\sum_{m\in{\cal M}}\left(\nabla{\cal L}_m\big(\bbtheta^k\big)-\nabla{\cal L}_m\big(\bbtheta^{k-1}\big)\right).
\end{equation}
Let us view $\nabla{\cal L}_m(\bbtheta^k)\!-\!\nabla{\cal L}_m(\bbtheta^{k-1})$ as a refinement to $\nabla{\cal L}_m(\bbtheta^{k-1})$, and recall that obtaining this refinement requires a round of communication between the server and the worker $m$.
Therefore, to save communication, we can skip the server's communication with the worker $m$ if this refinement is small compared to the old gradient; that is, $\|\nabla{\cal L}_m(\bbtheta^k)-\nabla{\cal L}_m(\bbtheta^{k-1})\|\ll \|\sum_{m\in{\cal M}}\nabla{\cal L}_m(\bbtheta^{k-1})\|$.

Generalizing on this intuition, given the generic outdated gradient components $\{\nabla{\cal L}_m(\hat{\bbtheta}_m^{k-1})\}$ with $\hat{\bbtheta}_m^{k-\!1}\!\!=\!\bbtheta_m^{k-1-\tau_m^{k-1}}\!\!$ for a certain $\tau_m^{k-1}\!\geq\! 0$, if communicating with some workers will bring only small gradient refinements, we skip those communications (contained in set ${\cal M}^k_c$) and end up with
\begin{subequations}\label{eq.LAG2}
	\begin{align}
	\bbtheta^{k+1}&=\bbtheta^k-\alpha \sum_{m\in{\cal M}}\nabla{\cal L}_m\big(\hat{\bbtheta}_m^{k-1}\big)-\alpha\sum_{m\in{{\cal M}^k}}\left(\nabla{\cal L}_m\big(\bbtheta^k\big)-\nabla{\cal L}_m\big(\hat{\bbtheta}_m^{k-1}\big)\right)\label{eq.LAG2-1}\\
	&=\bbtheta^k-\alpha \nabla {\cal L}(\bbtheta^k) -\alpha\sum_{m\in{\cal M}^k_c}\left(\nabla{\cal L}_m\big(\hat{\bbtheta}_m^{k-1}\big)-\nabla{\cal L}_m\big(\bbtheta^k\big)\right)\label{eq.LAG2-2}
\end{align}
\end{subequations}
where ${\cal M}^k$ and ${\cal M}^k_c$ are the sets of workers that \emph{do} and \emph{do not} communicate with the server, respectively.
It is easy to verify that \eqref{eq.LAG2} is identical to \eqref{eq.LAG1-0} and \eqref{eq.LAG1}.
Comparing \eqref{eq.gd1} with \eqref{eq.LAG2-2}, when ${\cal M}^k_c$ includes more workers, more communication is saved, but $\bbtheta^k$ is updated by a coarser gradient.

Key to addressing this communication versus accuracy tradeoff is a principled criterion to select a subset of workers ${\cal M}^k_c$ that do not communicate with the server at each round.
To achieve this ``sweet spot,'' we will rely on the fundamental descent lemma.
For GD, it is given as follows \citep{nesterov2013}.

\vspace{-0.2cm}
\begin{lemma}[GD descent in objective]
\label{lemma5}
Suppose ${\cal L}(\bbtheta)$ is $L$-smooth, and $\bar{\bbtheta}^{k+1}$ is generated by running one-step GD iteration \eqref{eq.gd1} given $\bbtheta^k$ and stepsize $\alpha$. Then the objective values satisfy
\begin{equation}\label{eq.lemma5}
 {\cal L}(\bar{\bbtheta}^{k+1})- {\cal L}(\bbtheta^k)\leq -\left(\alpha-\frac{\alpha^2 L}{2}\right)\|\nabla {\cal L}(\bbtheta^k)\|^2:=\Delta_{\rm GD}^k(\bbtheta^k).
\end{equation}
\end{lemma}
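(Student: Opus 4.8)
The plan is to derive the bound directly from the quadratic upper bound that characterizes $L$-smooth functions. Recall that $L$-smoothness of ${\cal L}$ (i.e., $\nabla{\cal L}$ being $L$-Lipschitz) is equivalent to the descent inequality
\begin{equation}
{\cal L}(\bby)\leq {\cal L}(\bbx)+\langle \nabla{\cal L}(\bbx),\bby-\bbx\rangle+\frac{L}{2}\|\bby-\bbx\|^2,\qquad \forall\,\bbx,\bby\in\mathbb{R}^d.
\end{equation}
The first thing I would observe is that the aggregated GD gradient in \eqref{eq.gd1} is precisely the full gradient, $\nabla_{\rm GD}^k=\sum_{m\in{\cal M}}\nabla{\cal L}_m(\bbtheta^k)=\nabla{\cal L}(\bbtheta^k)$, so a single GD step produces $\bar{\bbtheta}^{k+1}=\bbtheta^k-\alpha\nabla{\cal L}(\bbtheta^k)$.

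Next I would instantiate the quadratic bound at $\bbx=\bbtheta^k$ and $\bby=\bar{\bbtheta}^{k+1}$, so that $\bby-\bbx=-\alpha\nabla{\cal L}(\bbtheta^k)$. Substituting and using $\langle\nabla{\cal L}(\bbtheta^k),-\alpha\nabla{\cal L}(\bbtheta^k)\rangle=-\alpha\|\nabla{\cal L}(\bbtheta^k)\|^2$ together with $\|\alpha\nabla{\cal L}(\bbtheta^k)\|^2=\alpha^2\|\nabla{\cal L}(\bbtheta^k)\|^2$ gives
\begin{equation}
{\cal L}(\bar{\bbtheta}^{k+1})\leq {\cal L}(\bbtheta^k)-\alpha\|\nabla{\cal L}(\bbtheta^k)\|^2+\frac{\alpha^2 L}{2}\|\nabla{\cal L}(\bbtheta^k)\|^2.
\end{equation}
Collecting the two terms proportional to $\|\nabla{\cal L}(\bbtheta^k)\|^2$ and moving ${\cal L}(\bbtheta^k)$ to the left-hand side yields exactly \eqref{eq.lemma5}, which completes the argument.

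There is essentially no hard step here; this is a textbook consequence of the smoothness assumption. The only point warranting care is that the descent lemma must be applied to the \emph{total} objective ${\cal L}$ rather than to the individual summands ${\cal L}_m$, so the constant $L$ appearing in \eqref{eq.lemma5} is the smoothness modulus of the sum. I would also note in passing that the right-hand side, viewed as a quadratic in $\alpha$, is most negative at $\alpha=1/L$; this foreshadows the standard stepsize choice and the role this per-step decrease will play when the bound is chained across iterations in the subsequent convergence analysis.
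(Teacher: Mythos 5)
Your proof is correct and is exactly the standard argument: the paper itself does not prove this lemma but simply cites \citet{nesterov2013}, and your derivation (instantiating the quadratic upper bound of $L$-smoothness at $\bbx=\bbtheta^k$, $\bby=\bbtheta^k-\alpha\nabla{\cal L}(\bbtheta^k)$) is precisely the proof that citation refers to. Your remark that $L$ must be the smoothness constant of the aggregate ${\cal L}$, not of the individual ${\cal L}_m$, is a worthwhile point of care that matches the paper's Assumption 1.
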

\vspace{-0.2cm}

Likewise, for our wanted iteration \eqref{eq.LAG2}, the following holds; its proof is given in the Supplement.

\vspace{-0.2cm}
\begin{lemma}[LAG descent in objective]
\label{lemma1}
Suppose ${\cal L}(\bbtheta)$ is $L$-smooth, and $\bbtheta^{k+1}$ is generated by running one-step LAG iteration \eqref{eq.LAG1} given $\bbtheta^k$. The objective values satisfy (cf. $\delta\nabla^k_m$ in \eqref{eq.LAG1})
\begin{equation}\label{eq.lemma1}
\small
{\cal L}(\bbtheta^{k+1})\!-\!{\cal L}(\bbtheta^k)\leq \!-\frac{\alpha}{2}\left\|\nabla{\cal L}(\bbtheta^k)\right\|^2+\frac{\alpha}{2}\Big\|\!\sum_{m\in{\cal M}^k_c}\!\!\!\delta\nabla^k_m\Big\|^2\!+\!\left(\frac{L}{2}\!-\!\frac{1}{2\alpha}\right)\!\left\|\bbtheta^{k+1}\!\!-\!\bbtheta^k\right\|^2\!\!:=\!\Delta_{\rm LAG}^k(\bbtheta^k).\!
	\end{equation}
\end{lemma}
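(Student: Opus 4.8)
The plan is to start from the $L$-smoothness of ${\cal L}$ and then convert the resulting first-order (cross) term into an \emph{exact} combination of the three squared norms appearing on the right-hand side of \eqref{eq.lemma1}; no inequality beyond the descent inequality itself will actually be needed.

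First I would fix the one-step notation. Writing $\bbe^k:=\sum_{m\in{\cal M}^k_c}\delta\nabla^k_m$ for the aggregated ``lag error'' contributed by the skipped workers, the equivalence of \eqref{eq.LAG1} with the rewriting \eqref{eq.LAG2-2} gives
\[
\bbtheta^{k+1}-\bbtheta^k=-\alpha\nabla^k=-\alpha\big(\nabla{\cal L}(\bbtheta^k)-\bbe^k\big),
\]
so that, after rearranging, $\nabla{\cal L}(\bbtheta^k)=\bbe^k-\tfrac{1}{\alpha}\big(\bbtheta^{k+1}-\bbtheta^k\big)$. Since ${\cal L}$ is $L$-smooth, the standard descent inequality yields
\[
{\cal L}(\bbtheta^{k+1})-{\cal L}(\bbtheta^k)\leq \big\langle\nabla{\cal L}(\bbtheta^k),\,\bbtheta^{k+1}-\bbtheta^k\big\rangle+\frac{L}{2}\big\|\bbtheta^{k+1}-\bbtheta^k\big\|^2 .
\]

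Next I would evaluate the inner product exactly rather than bound it. Expanding $\langle\nabla{\cal L}(\bbtheta^k),\bbtheta^{k+1}-\bbtheta^k\rangle=-\alpha\|\nabla{\cal L}(\bbtheta^k)\|^2+\alpha\langle\nabla{\cal L}(\bbtheta^k),\bbe^k\rangle$ and then applying the polarization identity $2\langle\bba,\bbb\rangle=\|\bba\|^2+\|\bbb\|^2-\|\bba-\bbb\|^2$ with $\bba=\nabla{\cal L}(\bbtheta^k)$, $\bbb=\bbe^k$, together with the fact that $\|\nabla{\cal L}(\bbtheta^k)-\bbe^k\|=\tfrac1\alpha\|\bbtheta^{k+1}-\bbtheta^k\|$, gives the identity
\[
\big\langle\nabla{\cal L}(\bbtheta^k),\,\bbtheta^{k+1}-\bbtheta^k\big\rangle=-\frac{\alpha}{2}\big\|\nabla{\cal L}(\bbtheta^k)\big\|^2+\frac{\alpha}{2}\big\|\bbe^k\big\|^2-\frac{1}{2\alpha}\big\|\bbtheta^{k+1}-\bbtheta^k\big\|^2 .
\]
Substituting this into the descent inequality and collecting the two $\|\bbtheta^{k+1}-\bbtheta^k\|^2$ contributions into the single coefficient $\big(\tfrac{L}{2}-\tfrac{1}{2\alpha}\big)$ reproduces \eqref{eq.lemma1} verbatim.

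Because the cross term is reproduced by an equality rather than a bound, there is essentially no hard step here: the result is just the textbook smoothness inequality plus one algebraic identity. The only place that demands care is the sign bookkeeping in $\delta\nabla^k_m$ and $\bbe^k$ (so that the skipped-worker correction enters with the correct sign when passing from \eqref{eq.LAG1} to \eqref{eq.LAG2-2}), and keeping ${\cal M}^k$ versus ${\cal M}^k_c$ straight throughout. Everything else is routine.
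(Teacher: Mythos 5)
Your proof is correct and follows essentially the same route as the paper's: apply the $L$-smoothness descent inequality, substitute the LAG update into the cross term, and convert the inner product into the three squared norms via the polarization identity $2\langle\bba,\bbb\rangle=\|\bba\|^2+\|\bbb\|^2-\|\bba-\bbb\|^2$ together with $\|\nabla{\cal L}(\bbtheta^k)-\bbe^k\|=\tfrac{1}{\alpha}\|\bbtheta^{k+1}-\bbtheta^k\|$. The only difference from the paper is cosmetic (you carry the sign in $\bbe^k$ directly, whereas the paper absorbs $\sqrt{\alpha}$ factors into the two vectors before polarizing), and your sign bookkeeping for $\delta\nabla^k_m$ is consistent with \eqref{eq.LAG2-2}.
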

\vspace{-0.2cm}

Lemmas \ref{lemma5} and \ref{lemma1} estimate the objective value descent by performing one-iteration of the GD and LAG methods, respectively, conditioned on a common iterate $\bbtheta^k$.
GD finds {\small$\Delta_{\rm GD}^k(\bbtheta^k)$} by performing $M$ rounds of communication with all the workers, while LAG yields {\small$\Delta_{\rm LAG}^k(\bbtheta^k)$} by performing only $|{\cal M}^k|$ rounds of communication with a selected subset of workers.
Our pursuit is to select ${\cal M}^k$ to ensure that \emph{LAG enjoys larger per-communication descent than GD}; that is
\begin{equation}\label{eq.cond}
\frac{\Delta_{\rm LAG}^k(\bbtheta^k)}{|{\cal M}^k|}\leq \frac{\Delta_{\rm GD}^k(\bbtheta^k)}{M}.
\end{equation}

If we choose the standard $\alpha=1/L$ in Lemmas \ref{lemma5} and \ref{lemma1}, it follows that
\begin{subequations}\label{eq.descomp}
	\begin{empheq}[box=\fbox]{align}
	~~\Delta_{\rm GD}^k(\bbtheta^k) &:=-\frac{1}{2L}\left\|\nabla {\cal L}(\bbtheta^k)\right\|^2\qquad\qquad\qquad\qquad\qquad\qquad\qquad\qquad\qquad  \label{eq.des-gd}\\
	~~\Delta_{\rm LAG}^k(\bbtheta^k)&:=-\frac{1}{2L}\left\|\nabla{\cal L}(\bbtheta^k)\right\|^2\!+\frac{1}{2L}\Bigg\|\!\sum_{m\in{\cal M}_c^k}\!\Big(\nabla{\cal L}_m\big(\hat{\bbtheta}_m^{k-1}\big)-\nabla{\cal L}_m\big(\bbtheta^k\big)\!\Big)\Bigg\|^2.  ~~                 \label{eq.des-fiag}
\end{empheq}
\end{subequations}

Plugging \eqref{eq.descomp} into \eqref{eq.cond}, and rearranging terms, \eqref{eq.cond} is equivalent to
\begin{equation}\label{eq.cond-2}
\Bigg\|\sum_{m\in{\cal M}_c^k}\!\!\Big(\nabla{\cal L}_m\big(\hat{\bbtheta}_m^{k-1}\big)\!-\!\nabla{\cal L}_m\big(\bbtheta^k\big)\!\Big)\Bigg\|^2\leq \left|{\cal M}_c^k\right| \left\|\nabla {\cal L}(\bbtheta^k)\right\|^2\Big/M.
\end{equation}

Note that since we have
\begin{equation}\label{eq.cond-3}
\Bigg\|\!\sum_{m\in{\cal M}_c^k}\!\Big(\nabla{\cal L}_m\big(\hat{\bbtheta}_m^{k-1}\big)\!-\!\nabla{\cal L}_m\big(\bbtheta^k\big)\!\Big)\Bigg\|^2\leq \left|{\cal M}_c^k\right|	\sum_{m\in{\cal M}_c^k}\left\|\nabla{\cal L}_m\big(\hat{\bbtheta}_m^{k-1}\big)\!-\!\nabla{\cal L}_m\big(\bbtheta^k\big)\right\|^2
\end{equation}
if we can further show that
\begin{equation}\label{eq.cond-4}
\left\|\nabla{\cal L}_m\big(\hat{\bbtheta}_m^{k-1}\big)\!-\!\nabla{\cal L}_m\big(\bbtheta^k\big)\right\|^2\leq \left\|\nabla {\cal L}(\bbtheta^k)\right\|^2\Big/{M^2},~~~\forall m\in{\cal M}^k_c.
\end{equation}
then we can prove that \eqref{eq.cond-2} holds thus \eqref{eq.cond} also holds.

However, directly checking \eqref{eq.cond-4} at each worker is expensive since i) obtaining $\|\nabla {\cal L}(\bbtheta^k)\|^2$ requires information from all the workers; and ii) each worker does not know ${\cal M}_c^k$.
Instead, we approximate $\|\nabla {\cal L}(\bbtheta^k)\|^2$ in \eqref{eq.cond-4} by
\begin{equation}\label{eq.delta-approx}
\left\|\nabla {\cal L}(\bbtheta^k)\right\|^2\approx \frac{1}{\alpha^2}\sum_{d=1}^D \xi_d\Big\|\bbtheta^{k+1-d}-\bbtheta^{k-d}\Big\|^2
\end{equation}
where $\{\xi_d\}_{d=1}^D$ are constant weights.
The rationale here is that, as ${\cal L}$ is smooth, $\nabla {\cal L}(\bbtheta^k)$ cannot be very different from the recent gradients or the recent iterate \emph{lags}.

Building upon \eqref{eq.cond-4} and \eqref{eq.delta-approx}, we will
 include worker $m$ in ${\cal M}^k_c$ of \eqref{eq.LAG2} if it satisfies
\begin{subequations}\label{eq.trig-cond}
		\begin{flalign}\label{eq.trig-cond1}
&{\normalsize \rm \textbf{LAG-WK condition}}~~~~~~\left\|\nabla {\cal L}_m(\hat{\bbtheta}_m^{k-1})\!-\!\nabla {\cal L}_m(\bbtheta^k)\right\|^2\!\!\leq\! \frac{1}{\alpha^2M^2}\sum_{d=1}^D \xi_d\!\left\|\bbtheta^{k+1-d}\!-\!\bbtheta^{k-d}\right\|^2\!\!.\!\!\!\!\!&
	\end{flalign}
{\normalsize Condition \eqref{eq.trig-cond1} is checked at \emph{the worker side} after each worker receives $\bbtheta^k$ from the server and  computes its {\small$\nabla {\cal L}_m(\bbtheta^k)$}.
If broadcasting is also costly, we can resort to the following \emph{server side} rule:}	
\begin{flalign}\label{eq.trig-cond2}
&{\normalsize \rm \textbf{LAG-PS condition}}~~~~~~~~~L_m^2\left\|\hat{\bbtheta}_m^{k-1}-\bbtheta^k\right\|^2\leq \frac{1}{\alpha^2M^2}\sum_{d=1}^D \xi_d\left\|\bbtheta^{k+1-d}-\bbtheta^{k-d}\right\|^2.&
	\end{flalign}
\end{subequations}
The values of $\{\xi_d\}$ and $D$ admit simple choices, e.g., $\xi_d=1/D,\,\forall d$ with $D=10$ used in the simulations.

\begin{table}
\vspace{-0.6cm}
    \begin{tabular}{c c}
    \hspace{-0.4cm}
\begin{minipage}[t]{7.5cm}
  \vspace{0pt}
  \begin{algorithm}[H]
  \small
    \caption{LAG-WK}\label{algo:f-iag}
	\begin{algorithmic}[1]
		\State\textbf{Input:}~Stepsize $\alpha>0$, and $\{\xi_d\}$.
		\State\textbf{Initialize:}~$\bbtheta^1, \{\nabla {\cal L}_m(\hat{\bbtheta}^0_m),\,\forall m\}$.
		\For {$k= 1, 2,\ldots, K$}
		\State Server \textbf{broadcasts} $\bbtheta^k$ to all workers.
	    \For {worker $m=1, \ldots, M$}
		\State Worker $m$ \textbf{computes} $\nabla {\cal L}_m(\bbtheta^k)$.
	    \State Worker $m$ \textbf{checks} condition \eqref{eq.trig-cond1}.	
		\If {worker $m$ violates \eqref{eq.trig-cond1}}
		\State Worker $m$ \textbf{uploads} $\delta\nabla^k_m$.\\
		\qquad\qquad\Comment{Save $\nabla {\cal L}_m(\hat{\bbtheta}_m^k)=\nabla {\cal L}_m(\bbtheta^k$)}
		\Else \State{Worker $m$ uploads nothing.}
		\EndIf
		\EndFor
		\State Server \textbf{updates} via \eqref{eq.LAG1}.
		\EndFor
	\end{algorithmic}
  \end{algorithm}
\end{minipage}
&
\begin{minipage}[t]{7.5cm}
  \vspace{0pt}
  \begin{algorithm}[H]
  \small
    \caption{LAG-PS}\label{algo:f-iag2}
	\begin{algorithmic}[1]
		\State\textbf{Input:}~Stepsize $\alpha>0$, $\{\xi_d\}$, and $L_m,\,\forall m$.
		\State\textbf{Initialize:}~$\bbtheta^1, \{\hat{\bbtheta}^0_m,\!\nabla {\cal L}_m(\hat{\bbtheta}^0_m),\forall m\}$.
		\For {$k= 1, 2,\ldots, K$}
	    \For {worker $m=1, \ldots, M$}
	    \State Server \textbf{checks} condition \eqref{eq.trig-cond2}.	
	    \If {worker $m$ violates \eqref{eq.trig-cond2}}
	    \State Server \textbf{sends} $\bbtheta^k$ to worker $m$.\\\qquad\qquad\qquad\Comment{Save $\hat{\bbtheta}_m^k=\bbtheta^k$ at server}
		\State Worker $m$ \textbf{computes} $\nabla\! {\cal L}_m(\bbtheta^k)$.
		\State Worker $m$ \textbf{uploads} $\delta\nabla^k_m$.		
		\Else \State{No actions at server and worker $m$.}
	    \EndIf
		\EndFor
		\State Server \textbf{updates} via \eqref{eq.LAG1}.
		\EndFor
	\end{algorithmic}
  \end{algorithm}
\end{minipage}
   \end{tabular}
   \vspace{0.1cm}
   \caption{A comparison of LAG-WK and LAG-PS.}
     \vspace{-0.2cm}
\end{table}

\vspace{0.1cm}
\noindent\textbf{LAG-WK vs LAG-PS}. To perform \eqref{eq.trig-cond1}, the server needs to broadcast the current model $\bbtheta^k$, and all the workers need to compute the gradient; while performing \eqref{eq.trig-cond2}, the server needs the estimated smoothness constant $L_m$ for all the local functions. On the other hand, as it will be shown in Section \ref{sec.cc-ana}, \eqref{eq.trig-cond1} and \eqref{eq.trig-cond2} lead to the same worst-case convergence guarantees.
In practice, however, the server-side condition is more conservative than the worker-side one at communication reduction, because the smoothness of ${\cal L}_m$ readily implies that satisfying \eqref{eq.trig-cond2} will necessarily satisfy \eqref{eq.trig-cond1}, but not vice versa.
Empirically, \eqref{eq.trig-cond1} will lead to a larger ${\cal M}^k_c$ than that of \eqref{eq.trig-cond2}, and thus extra communication overhead will be saved.
Hence, \eqref{eq.trig-cond1} and \eqref{eq.trig-cond2} can be chosen according to users' preferences. LAG-WK and LAG-PS are summarized as Algorithms \ref{algo:f-iag} and \ref{algo:f-iag2}.

\vspace{0.1cm}

Regarding our proposed LAG method, two remarks are in order.

\vspace{0.1cm}
\noindent\textbf{R1)}
With recursive update of the lagged gradients in \eqref{eq.LAG1} and the lagged iterates in \eqref{eq.trig-cond}, implementing LAG is as simple as GD; see Table \ref{tab:3algcomp}. Both empirically and theoretically, we will further demonstrate that using lagged gradients even reduces the overall delay by cutting down costly communication.
\vspace{0.1cm}

\noindent\textbf{R2)}
Compared with existing efforts for communication-efficient learning such as quantized gradient, Nesterov's acceleration, dual coordinate ascent and second-order methods, LAG is not orthogonal to all of them.
Instead, LAG can be combined with these methods to develop even more powerful learning schemes. Extension to the proximal LAG is also possible to cover nonsmooth regularizers.


\section{Iteration and communication complexity}
\label{sec.cc-ana}
In this section, we establish the convergence of LAG, under the following standard conditions.

\vspace{0.1cm}

\noindent\textbf{Assumption 1}:
Loss function \emph{${\cal L}_m(\bbtheta)$ is $L_m$-smooth, and ${\cal L}(\bbtheta)$ is $L$-smooth.}
\vspace{0.1cm}

\noindent\textbf{Assumption 2}:
\emph{${\cal L}(\bbtheta)$ is convex and coercive.}
\vspace{0.1cm}

\noindent\textbf{Assumption 3}:
\emph{${\cal L}(\bbtheta)$ is $\mu$-strongly convex, or generally, satisfies the Polyak-{\L}ojasiewicz (PL) condition with the constant $\mu$; that is, $2\mu({\cal L}(\bbtheta^k)-{\cal L}(\bbtheta^*))\leq \|\nabla{\cal L}(\bbtheta^k)\|^2$.}
\vspace{0.1cm}

Note that the PL condition in Assumption 3 is strictly weaker than the strongly convexity (or even convexity), and it is satisfied by a wider range of machine learning problems such as least squares for underdetermined linear systems and logistic regression; see details in \citep{karimi2016}. 
While the PL condition is sufficient for the subsequent linear convergence analysis, we will still use the strong convexity for the ease of understanding by a wide audience. 
	
The subsequent analysis critically builds on the following \textbf{Lyapunov function}:
 \begin{equation}\label{eq.Lyap}
 	\mathbb{V}^k:={\cal L}(\bbtheta^k)-{\cal L}(\bbtheta^*)+\sum_{d=1}^D \beta_d\left\|\bbtheta^{k+1-d}-\bbtheta^{k-d}\right\|^2
 \end{equation}
where $\bbtheta^*$ is the minimizer of \eqref{opt0}, and $\{\beta_d\}$ are constants that will be determined later.

We will start with the sufficient descent of our $\mathbb{V}^k$ in \eqref{eq.Lyap}.

\begin{lemma}[descent lemma]
\label{lemma2}
	Under Assumption 1, if $\alpha$ and $\{\xi_d\}$ are chosen properly, there exist constants $c_0,\cdots,c_D\geq 0$ such that
the Lyapunov function in \eqref{eq.Lyap} satisfies
\begin{equation}\label{eq.lemma2}
\mathbb{V}^{k+1}-\mathbb{V}^k\leq -c_0\left\|\nabla{\cal L}(\bbtheta^k)\right\|^2-\sum_{d=1}^D c_d\left\|\bbtheta^{k+1-d}\!-\!\bbtheta^{k-d}\right\|^2\!\!\!
\end{equation}
which implies the descent in our Lyapunov function, that is, ${\small \mathbb{V}^{k+1}\leq\mathbb{V}^k}$.
\end{lemma}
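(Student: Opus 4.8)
The plan is to start from the one-step objective descent of Lemma~\ref{lemma1} and convert its two error terms into quantities that the increment of the Lyapunov function can absorb. The first task is to bound the aggregated lazy-gradient error $\frac{\alpha}{2}\big\|\sum_{m\in{\cal M}^k_c}\delta\nabla^k_m\big\|^2$. Applying Cauchy--Schwarz exactly as in \eqref{eq.cond-3} gives $\big\|\sum_{m\in{\cal M}^k_c}\delta\nabla^k_m\big\|^2\le|{\cal M}^k_c|\sum_{m\in{\cal M}^k_c}\|\delta\nabla^k_m\|^2$, and then the triggering rule \eqref{eq.trig-cond1}, which every $m\in{\cal M}^k_c$ satisfies by construction, bounds each summand by $\frac{1}{\alpha^2M^2}\sum_{d=1}^D\xi_d\|\bbtheta^{k+1-d}-\bbtheta^{k-d}\|^2$. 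Since $|{\cal M}^k_c|\le M$, the two factors of $|{\cal M}^k_c|$ cancel against $M^2$, yielding
$$\frac{\alpha}{2}\Big\|\sum_{m\in{\cal M}^k_c}\delta\nabla^k_m\Big\|^2\le\frac{1}{2\alpha}\sum_{d=1}^D\xi_d\|\bbtheta^{k+1-d}-\bbtheta^{k-d}\|^2.$$
For the server rule \eqref{eq.trig-cond2} the identical bound follows after invoking $L_m$-smoothness to pass from the iterate gap to the gradient gap, so both conditions give the same estimate.

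Next I would expand the Lyapunov increment. Writing $u_j:=\|\bbtheta^{j+1}-\bbtheta^j\|^2$, the memory term telescopes into
$$\sum_{d=1}^D\beta_d(u_{k+1-d}-u_{k-d})=\beta_1 u_k+\sum_{d=1}^{D-1}(\beta_{d+1}-\beta_d)u_{k-d}-\beta_D u_{k-D}.$$
Combining this with Lemma~\ref{lemma1} (whose $\|\bbtheta^{k+1}-\bbtheta^k\|^2=u_k$ term carries coefficient $\frac{L}{2}-\frac{1}{2\alpha}$) and the error bound above, I collect the coefficient of each quadratic term:
$$u_k:\ \tfrac{L}{2}-\tfrac{1}{2\alpha}+\beta_1,\qquad u_{k-d}\ (1\le d\le D-1):\ \tfrac{\xi_d}{2\alpha}+\beta_{d+1}-\beta_d,\qquad u_{k-D}:\ \tfrac{\xi_D}{2\alpha}-\beta_D,$$
while $\|\nabla{\cal L}(\bbtheta^k)\|^2$ retains coefficient $-\frac{\alpha}{2}$, so immediately $c_0=\frac{\alpha}{2}\ge 0$.

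Finally I would choose the free constants to drive every bracket nonpositive. Taking $\{\beta_d\}$ strictly decreasing with consecutive gaps covering the $\xi$-weights, e.g.\ $\beta_d=\frac{1}{2\alpha}\sum_{j=d}^D\xi_j$, makes each $u_{k-d}$ coefficient equal to $-c_d\le 0$ for $d\ge 1$ (here $c_d=0$, or strictly positive if the gaps are enlarged), since then $\beta_{d+1}-\beta_d=-\frac{\xi_d}{2\alpha}$ and $\beta_D=\frac{\xi_D}{2\alpha}$. The only remaining requirement is that the $u_k$ bracket be nonpositive, $\frac{L}{2}-\frac{1}{2\alpha}+\beta_1\le 0$; with the above choice this reduces to $\sum_{d=1}^D\xi_d\le 1-\alpha L$ (forcing $\alpha<1/L$), which is precisely what ``$\alpha$ and $\{\xi_d\}$ chosen properly'' should mean. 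Under this single feasibility condition the $u_k$ term is nonpositive and can be discarded, leaving $\mathbb{V}^{k+1}-\mathbb{V}^k\le-c_0\|\nabla{\cal L}(\bbtheta^k)\|^2-\sum_{d=1}^D c_d\|\bbtheta^{k+1-d}-\bbtheta^{k-d}\|^2$ with all $c_d\ge 0$, and in particular $\mathbb{V}^{k+1}\le\mathbb{V}^k$.

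The main obstacle is this last feasibility step rather than any single estimate: the same weights $\{\xi_d\}$ play a double role, serving as the budget spent by the gradient-error term and as the increments the Lyapunov weights $\{\beta_d\}$ must cover. One therefore has to confirm that a single sequence $\{\beta_d\}$ can satisfy the telescoping inequalities $\beta_d-\beta_{d+1}\ge\frac{\xi_d}{2\alpha}$ and $\beta_D\ge\frac{\xi_D}{2\alpha}$ while still respecting the cap $\beta_1\le\frac{1}{2\alpha}-\frac{L}{2}$. Summing the telescoping constraints shows $\beta_1\ge\frac{1}{2\alpha}\sum_d\xi_d$, so the apparent freedom in $\{\xi_d\}$ and $\{\beta_d\}$ collapses to the one scalar condition $\sum_d\xi_d\le 1-\alpha L$; verifying this compatibility and that no choice is forced to make some $c_d<0$ is the only genuine work.
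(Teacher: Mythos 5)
Your proof is correct and follows essentially the same route as the paper's: start from Lemma~\ref{lemma1}, use Cauchy--Schwarz plus the trigger conditions \eqref{eq.trig-cond} to convert the lazy-gradient error into $\frac{1}{2\alpha}\sum_{d=1}^D\xi_d\|\bbtheta^{k+1-d}-\bbtheta^{k-d}\|^2$, then telescope the memory terms and match coefficients; your choice $\beta_d=\frac{1}{2\alpha}\sum_{j=d}^D\xi_j$ is exactly the one the paper adopts in its proof of Proposition~\ref{propncvx}, and your feasibility condition $\sum_d\xi_d\le 1-\alpha L$ matches \eqref{eq.beta-cond5}. The only difference is that the paper first expands $\|\bbtheta^{k+1}-\bbtheta^k\|^2$ via Young's inequality with a free parameter $\rho$ before forcing the brackets nonpositive, which permits $\beta_1$ above $\frac{1-\alpha L}{2\alpha}$ and stepsizes up to the GD range $\alpha\in(0,2/L)$, whereas your direct requirement $\frac{L}{2}-\frac{1}{2\alpha}+\beta_1\le 0$ restricts to $\alpha\le(1-\sum_d\xi_d)/L<1/L$ --- still sufficient for the lemma as stated, since it only asks that the parameters be ``chosen properly.''
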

Lemma \ref{lemma2} is a generalization of GD's descent lemma.
As specified in the supplementary material, under properly chosen $\{\xi_d\}$, the stepsize $\alpha\in(0, 2/L)$ including $\alpha=1/L$ guarantees \eqref{eq.lemma2}, matching the stepsize region of GD.
With ${\cal M}^k={\cal M}$ and $\beta_d=0,\,\forall d$ in \eqref{eq.Lyap}, Lemma \ref{lemma2} reduces to Lemma \ref{lemma5}.

\subsection{Convergence in strongly convex case}\label{subsec.scc}
We first present the convergence under the smooth and strongly convex condition.

\begin{customthm}{1}[strongly convex case]\label{theorem2}
Under Assumptions 1 and 3, the iterates $\{\bbtheta^k\}$ generated by LAG-WK or LAG-PS satisfy
\begin{align}\label{eq.theorem2}
~{\cal L}\big(\bbtheta^K\big)-{\cal L}\big(\bbtheta^*\big)\leq \big(1-c(\alpha;\{\xi_d\})\big)^K \,\mathbb{V}^0
\end{align}
where $\bbtheta^*$ is the minimizer of ${\cal L}(\bbtheta)$ in \eqref{opt0}, and $c(\alpha;\{\xi_d\})\in(0,1)$ is a constant depending on $\alpha,\,\{\xi_d\}$ and
$\{\beta_d\}$ as well as the condition number $\kappa:=L/\mu$ that are specified in the supplementary material.
\end{customthm}

\noindent\textbf{Iteration complexity}.
The iteration complexity in its generic form is complicated since $c(\alpha;\{\xi_d\})$ depends on the choice of several parameters.
Specifically, if we choose the parameters as follows
\begin{equation}\label{eq.para-set}
\xi_1=\cdots=\xi_D:=\xi<\frac{1}{D},~~~~\alpha:=\frac{1-\sqrt{D\xi}}{L},~~~~\beta_1=\cdots=\beta_D:=\frac{D-d+1}{2\alpha \sqrt{D/\xi}}
\end{equation}
then, following Theorem \ref{theorem2}, the iteration complexity of LAG in this case is
\begin{equation}\label{iter.para-set}
\mathbb{I}_{\rm LAG}(\epsilon)= \frac{\kappa}{1-\sqrt{D\xi}}\log\left(\epsilon^{-1}\right).	
\end{equation}

The iteration complexity in \eqref{iter.para-set} is on the same order of GD's iteration complexity $\kappa \log (\epsilon^{-1})$, but has a worse constant.
This is the consequence of using a smaller stepsize in \eqref{eq.para-set} (relative to $\alpha=1/L$ in GD) to simplify the choice of other parameters.
Empirically, LAG with $\alpha=1/L$ can achieve almost the same empirical iteration complexity as GD; see Section \ref{sec.nums}.
Building on the iteration complexity, we study next the \emph{communication complexity} of LAG.
In the setting of our interest, we define the communication complexity as the \emph{total number of uploads} over all the workers needed to achieve accuracy $\epsilon$.
While the accuracy refers to the objective optimality error in the strongly convex case, it is considered as the gradient norm in general (non)convex cases.

\begin{wrapfigure}{r}{3.1in}
\vspace{-0.1cm}
\def\epsfsize#1#2{0.295#1}
\centerline{\epsffile{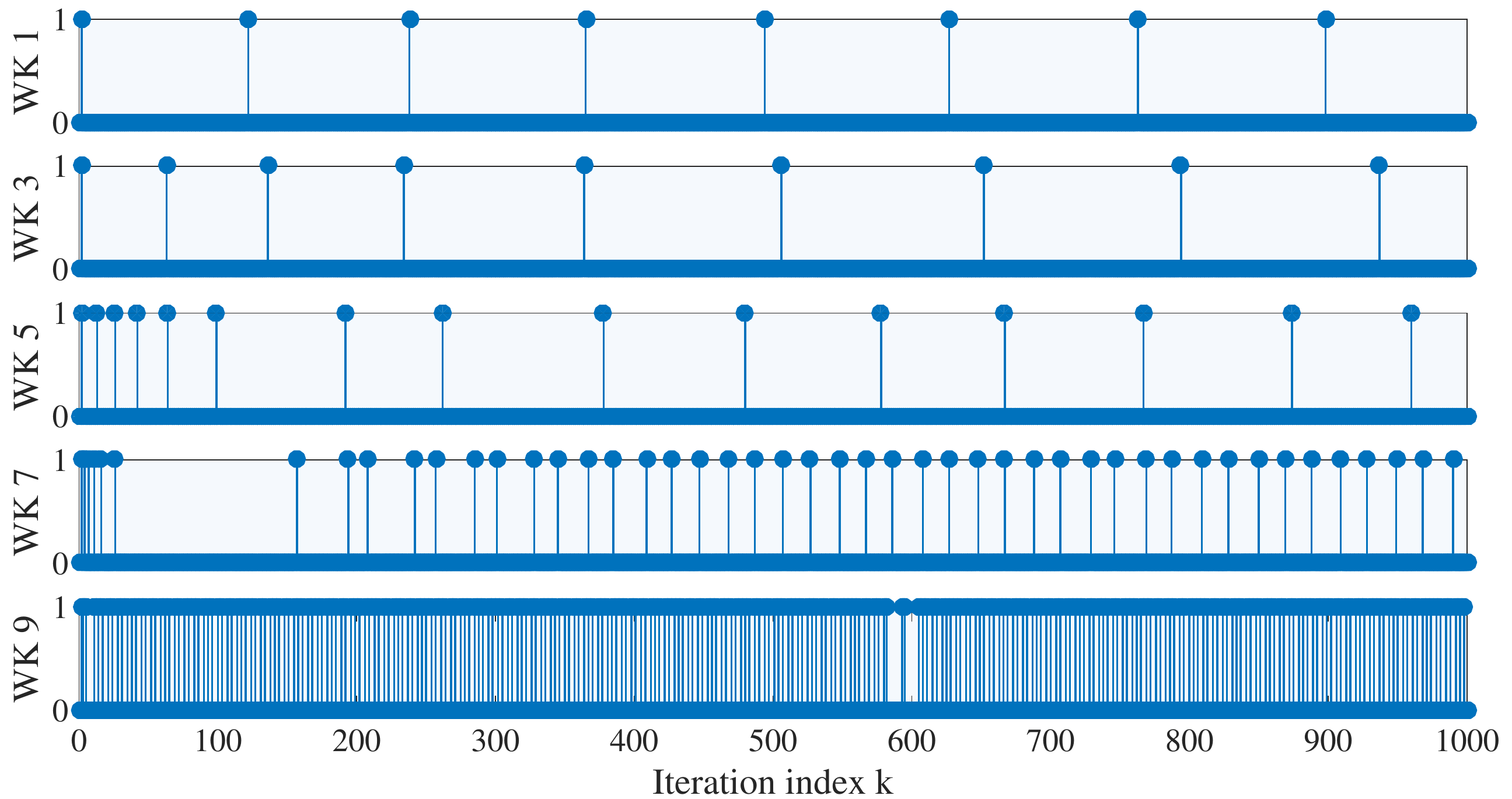}}
\vspace*{-8pt}
\captionsetup{format=plain}
  \caption{Communication events of workers $1,3,5,7,9$ over $1,000$ iterations. Each stick is an upload.
  An example with $L_1< \ldots <L_9$.}
\label{fig:commcheck}
\vspace{-0.4cm}
\end{wrapfigure}

The power of LAG is best illustrated by numerical examples; see an example of LAG-WK in Figure \ref{fig:commcheck}.
Clearly, workers with a small smoothness constant communicate with the server less frequently.
This intuition will be formally treated in the next lemma.
\begin{lemma}[lazy communication]\label{lemma4}
 Define the importance factor of every worker $m$ as $\mathds{H}(m):=L_m/L$.
 If the stepsize $\alpha$ and the constants $\{\xi_d\}$ in \eqref{eq.trig-cond} satisfy $\xi_D\leq \cdots \leq \xi_d \leq \cdots\leq\xi_1$ and worker $m$ satisfies
 \begin{equation}\label{eq.lemma4}
\mathds{H}^2(m)\leq \xi_d \big/ (d\alpha^2 L^2 M^2):=\gamma_d
 \end{equation}
 then, until iteration $k$, worker $m$ communicates with the server at most $k/(d+1)$ rounds.
\end{lemma}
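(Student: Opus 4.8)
The plan is to establish a stronger, essentially combinatorial statement: under the stated hypotheses, worker $m$ must remain silent for at least $d$ consecutive iterations following each of its uploads. Once this ``refractory period'' is in hand, the count follows immediately, since in $k$ iterations uploads can then be spaced no closer than every $d+1$ steps, giving at most $k/(d+1)$ of them. I would run the core argument for the server-side rule \eqref{eq.trig-cond2} and transfer it to the worker-side rule \eqref{eq.trig-cond1} for free via smoothness: because ${\cal L}_m$ is $L_m$-smooth, $\|\nabla{\cal L}_m(\hat{\bbtheta}_m^{k-1})-\nabla{\cal L}_m(\bbtheta^k)\|^2\leq L_m^2\|\hat{\bbtheta}_m^{k-1}-\bbtheta^k\|^2$, so whenever \eqref{eq.trig-cond2} holds (no upload under LAG-PS) the left side of \eqref{eq.trig-cond1} is even smaller and \eqref{eq.trig-cond1} holds too; hence LAG-WK uploads no more often than LAG-PS.

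The key estimate concerns $\hat{\bbtheta}_m^{k-1}-\bbtheta^k$. Suppose worker $m$ last uploaded at iteration $k-\tau$, so that $\hat{\bbtheta}_m^{k-1}=\bbtheta^{k-\tau}$ with staleness $\tau\geq 1$. I would write this gap as a telescoping sum of consecutive iterate increments and apply $\|\sum_{i=1}^{\tau}\bbv_i\|^2\leq \tau\sum_{i=1}^{\tau}\|\bbv_i\|^2$ to get
\begin{equation}
\left\|\hat{\bbtheta}_m^{k-1}-\bbtheta^k\right\|^2=\Bigg\|\sum_{d=1}^{\tau}\big(\bbtheta^{k+1-d}-\bbtheta^{k-d}\big)\Bigg\|^2\leq \tau\sum_{d=1}^{\tau}\left\|\bbtheta^{k+1-d}-\bbtheta^{k-d}\right\|^2. \nonumber
\end{equation}
This reindexing is exactly what aligns the increments appearing here with those weighted by $\xi_d$ on the right-hand side of \eqref{eq.trig-cond2}.

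Next I would compare the two sides term by term. Multiplying by $L_m^2$, it suffices to verify $L_m^2\tau\leq \xi_d/(\alpha^2M^2)$ for each $d\in\{1,\dots,\tau\}$, since then each increment on the left is dominated by the matching $\xi_d$-weighted increment on the right and the remaining terms $d=\tau+1,\dots,D$ on the right are nonnegative (here $\tau\leq d\leq D$ guarantees enough terms exist). Because $\{\xi_d\}$ is nonincreasing, the binding constraint is the smallest weight $\xi_\tau$, i.e. it suffices that $L_m^2\tau\leq \xi_\tau/(\alpha^2M^2)$, which is precisely $\mathds{H}^2(m)\leq \xi_\tau/(\tau\alpha^2 L^2M^2)=\gamma_\tau$. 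Finally I would observe that $\gamma_d=\xi_d/(d\alpha^2L^2M^2)$ is itself nonincreasing in $d$ (numerator shrinks while denominator grows), so the hypothesis $\mathds{H}^2(m)\leq\gamma_d$ forces $\mathds{H}^2(m)\leq\gamma_\tau$ for every $\tau\leq d$. Thus the trigger condition holds at every staleness $\tau\in\{1,\dots,d\}$: after an upload the worker is guaranteed silent for the next $d$ iterations, completing the count.

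The main obstacle is getting the bookkeeping exactly right in the comparison step: the telescoped left-hand side involves only the $\tau$ most recent increments with a uniform factor $\tau$, whereas the right-hand side carries the index-dependent weights $\xi_d$ over the full window $d=1,\dots,D$. The argument succeeds only because $\{\xi_d\}$ is ordered so that the weakest available weight $\xi_\tau$ still dominates $L_m^2\tau$, and because the same monotonicity, transferred to $\{\gamma_d\}$, lets a single hypothesis at index $d$ cover all smaller staleness values at once; verifying both monotonicity facts together with the index-range condition $\tau\leq D$ is the delicate part, while the smoothness transfer to LAG-WK is routine.
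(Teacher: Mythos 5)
Your proposal is correct and follows essentially the same route as the paper's proof: telescope $\hat{\bbtheta}_m^{k-1}-\bbtheta^k$ over the $\tau\leq d$ most recent increments, apply $\|\sum_i \bbv_i\|^2\leq\tau\sum_i\|\bbv_i\|^2$, use the monotonicity of $\{\xi_d\}$ (equivalently of $\{\gamma_d\}$) to dominate the result by the right-hand side of \eqref{eq.trig-cond2}, and transfer to \eqref{eq.trig-cond1} via $L_m$-smoothness. The only cosmetic difference is that you phrase the reduction through the nonincreasing sequence $\gamma_\tau$ while the paper bounds $\xi_d\leq\xi_{d'}$ directly; the content is identical.
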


Lemma \ref{lemma4} asserts that if the worker $m$ has a small $L_m$ (a close-to-linear loss function) such that $\mathds{H}^2(m)\leq \gamma_d$, then under LAG, it only communicates with the server at most $k/(d+1)$ rounds. This is in contrast to the total of $k$ communication rounds involved per worker under GD.
Ideally, we want as many workers satisfying \eqref{eq.lemma4} as possible, especially when $d$ is large.

To quantify the overall communication reduction, we will rely on what we term the \textbf{heterogeneity score function}, given by
	\begin{equation}\label{eq.heter-score}
		h(\gamma):=\frac{1}{M}\sum_{m\in{\cal M}}\mathds{1}(\mathds{H}^2(m)\leq \gamma)
	\end{equation}
	where the indicator $\mathds{1}$ equals $1$ when $\mathds{H}^2(m)\leq \gamma$ holds, and $0$ otherwise.
	Clearly, $h(\gamma)$ is a nondecreasing function of $\gamma$, that depends on the distribution of smoothness constants $L_1, L_2, \ldots, L_M$.
	It is also instructive to view it as the cumulative distribution function of the \emph{deterministic} quantity $\mathds{H}^2(m)$, implying $h(\gamma)\in[0,1]$.
Putting it in our context, the critical quantity $h(\gamma_d)$ lower bounds the fraction of workers that communicate with the server at most $k/(d+1)$ rounds until the $k$-th iteration.

We are now ready to present the communication complexity.

\begin{customprop}{1}[communication complexity]\label{prop.comm}
Under the same conditions as those in Theorem \ref{theorem2}, with $\gamma_d$ defined in \eqref{eq.lemma4} and the function $h(\gamma)$ defined in \eqref{eq.heter-score}, the communication complexity of LAG denoted as $\mathbb{C}_{\rm LAG}(\epsilon)$ is bounded by
\begin{equation}\label{eq.prop5}
\mathbb{C}_{\rm LAG}(\epsilon)\leq \bigg(\!1-\sum_{d=1}^D\left(\frac{1}{d}-\frac{1}{d+1}\right)h\left(\gamma_d\right)\!\!\bigg)\!M\,\mathbb{I}_{\rm LAG}(\epsilon):=\left(1-\Delta\bar{\mathbb{C}}(h;\{\gamma_d\})\right)\!M\,\mathbb{I}_{\rm LAG}(\epsilon)
\end{equation}
where the constant is defined as $\Delta\bar{\mathbb{C}}(h;\{\gamma_d\}):=\sum_{d=1}^D\big(\frac{1}{d}-\frac{1}{d+1}\big)h\left(\gamma_d\right)$.
\end{customprop}

The communication complexity in \eqref{eq.prop5} crucially depends on the iteration complexity $\mathbb{I}_{\rm LAG}(\epsilon)$ as well as what we call the \textbf{fraction of reduced communication per iteration} $\Delta\bar{\mathbb{C}}(h;\{\gamma_d\})$.
Simply choosing the parameters as \eqref{eq.para-set}, it follows from \eqref{iter.para-set} and \eqref{eq.prop5} that (cf. {\small$\gamma_d\!=\!{\xi}(1-\sqrt{D\xi})^{-2}M^{-2} d^{-1}$})
\begin{equation}\label{eq.prop5-2}
\mathbb{C}_{\rm LAG}(\epsilon)\leq \big(1-\Delta\bar{\mathbb{C}}(h;\xi)\big)\mathbb{C}_{\rm GD}(\epsilon)\big/\big(1-\sqrt{D\xi}\big).
\end{equation}
where the GD's complexity is $\mathbb{C}_{\rm GD}(\epsilon)=M\kappa \log (\epsilon^{-1})$.
In \eqref{eq.prop5-2}, due to the nondecreasing property of $h(\gamma)$, increasing the constant $\xi$ yields a smaller fraction of workers $1-\Delta\bar{\mathbb{C}}(h;\xi)$ that are communicating per iteration, yet with a larger number of iterations (cf. \eqref{iter.para-set}).
The key enabler of LAG's communication reduction is a heterogeneous environment associated with a favorable $h(\gamma)$ ensuring that the benefit of increasing $\xi$ is more significant than its effect on increasing iteration complexity.
More precisely, for a given $\xi$, if $h(\gamma)$ guarantees $\Delta\bar{\mathbb{C}}(h;\xi)>\sqrt{D\xi}$, then we have $\mathbb{C}_{\rm LAG}(\epsilon)<\mathbb{C}_{\rm GD}(\epsilon)$.
Intuitively speaking, if there is a large fraction of workers with small $L_m$,
LAG has lower communication complexity than GD.
An example follows to illustrate this reduction.

\vspace{0.1cm}
\noindent\textbf{Example}.
Consider $L_m=1,\, m\neq M$, and $L_M=L\geq M^2\gg 1$,
where we have $\mathds{H}(m)=1/L,m\neq M,\, \mathds{H}(M)=1$,
implying that $h(\gamma)\geq1-\frac{1}{M}$, if $\gamma\geq 1/L^2$.
Choosing $D\geq M$ and $\xi=M^2D/L^2<1/D$ in \eqref{eq.para-set} such that $\gamma_D\geq 1/L^2$ in \eqref{eq.lemma4}, we have (cf. \eqref{eq.prop5-2})
	\begin{equation}
\mathbb{C}_{\rm LAG}(\epsilon)\big/\mathbb{C}_{\rm GD}(\epsilon)\leq \left[1-\left(1-\frac{1}{D+1}\right)\!\!\left(1-\frac{1}{M}\right)\right]\Big/\Big(1-{MD}/L\Big)\approx \frac{M+D}{M(D+1)}\approx\frac{2}{M}.
	\end{equation}
Due to technical issues in the convergence analysis, the current condition on $h(\gamma)$ to ensure LAG's communication reduction is relatively restrictive.
Establishing communication reduction on a broader learning setting that matches the LAG's intriguing empirical performance is in our research agenda.

\subsection{Convergence in (non)convex case}
LAG's convergence and communication reduction guarantees go beyond the strongly-convex case.
We next establish the convergence of LAG for general convex functions.
\begin{customthm}{2}[convex case]\label{theorem1}
Under Assumptions 1 and 2, if $\alpha$ and $\{\xi_d\}$ are chosen properly, then the iterates $\{\bbtheta^k\}$ generated by LAG-WK or LAG-PS satisfy
\begin{equation}\label{eq.theorem1}
{\cal L}(\bbtheta^K)-{\cal L}(\bbtheta^*)={\cal O}\left({1}/{K}\right).
\end{equation}
\end{customthm}

For nonconvex objective functions, LAG can guarantee the following convergence result.
\begin{customthm}{3}[nonconvex case]\label{theorem0}
Under Assumption 1, if $\alpha$ and $\{\xi_d\}$ are chosen properly, then the iterates $\{\bbtheta^k\}$ generated by LAG-WK or LAG-PS satisfy
\begin{equation}\label{eq.theorem0}
\min_{1\leq k \leq K}\big\|\bbtheta^{k+1}-\bbtheta^k\big\|^2=o\left({1}/{K}\right)~~~{\rm and}~~~\min_{1\leq k \leq K}\big\|\nabla {\cal L}(\bbtheta^k)\big\|^2=o\left({1}/{K}\right).
\end{equation}
\end{customthm}

Theorems \ref{theorem1} and \ref{theorem0} assert that with the judiciously designed lazy gradient aggregation rules, LAG can achieve order of convergence rate identical to GD for general convex and nonconvex smooth objective functions. Furthermore, we next show that in these general cases, LAG still requires fewer communication rounds than GD, under certain conditions on the heterogeneity function $h(\gamma)$.

In the general smooth (possibly nonconvex) case however,
we define the communication complexity in terms of achieving $\epsilon$-gradient error; e.g., $\min_{k=1,\cdots,K}	 \|\nabla{\cal L}(\bbtheta^k)\|^2\leq \epsilon$. 
Similar to Proposition \ref{prop.comm}, we present the communication complexity as follows.

\begin{customprop}{2}[communication complexity]\label{propncvx}
Under Assumption 1, with $\Delta\bar{\mathbb{C}}(h;\{\gamma_d\})$ defined as in Proposition \ref{prop.comm}, the communication complexity of LAG denoted as $\mathbb{C}_{\rm N-LAG}(\epsilon)$ is bounded by
\begin{align}\label{prononconvex-r1}
\mathbb{C}_{\rm N-LAG}(\epsilon)\leq\left(1-\Delta\bar{\mathbb{C}}(h;\{\gamma_d\})\right)\!\frac{\mathbb{C}_{\rm N-GD}(\epsilon)}{(1-\sum_{d=1}^D\xi_d)}
\end{align}
where $\mathbb{C}_{\rm N-GD}(\epsilon)$ is the communication complexity of GD.
Choosing the parameters as \eqref{eq.para-set}, if the heterogeneity function $h(\gamma)$ satisfies that there exists $\gamma'$ such that $\gamma'<\frac{h(\gamma')}{(D+1)D M^2}$, then we have that
\begin{align}\label{prononconvex-r2}
\mathbb{C}_{\rm N-LAG}(\epsilon)<\mathbb{C}_{\rm N-GD}(\epsilon).
\end{align}
\end{customprop}

Along with Proposition \ref{prop.comm}, we have shown that for strongly convex, convex, and nonconvex smooth objective functions, LAG enjoys provably lower communication overhead relative to GD in certain heterogeneous learning settings.
In fact, the LAG's empirical performance gain over GD goes far beyond the above worst-case theoretical analysis, and lies in a much broader distributed learning setting, which is confirmed by the subsequent numerical tests.

\section{Numerical tests}
\label{sec.nums}
To validate the theoretical results, this section evaluates the empirical performance of LAG in linear and logistic regression tasks. All experiments were performed using MATLAB on an Intel CPU @ 3.4 GHz (32 GB RAM) desktop.
By default, we consider one server, and nine workers.
Throughout the test, we use the optimality error in objective ${\cal L}(\bbtheta^k)-{\cal L}(\bbtheta^*)$ as figure of merit of our solution.
To benchmark LAG, we consider the following approaches.

\noindent$\triangleright$ \textbf{Cyc-IAG} is the cyclic version of the incremental aggregated gradient (IAG) method \citep{blatt2007,gurbuzbalaban2017} that resembles the recursion \eqref{eq.LAG1}, but communicates with one worker per iteration in a cyclic fashion.

\noindent$\triangleright$ \textbf{Num-IAG} also resembles the recursion \eqref{eq.LAG1}, but it randomly selects one worker to obtain a fresh gradient per iteration with the probability of choosing worker $m$ equal to $L_m/\sum_{m\in{\cal M}}L_m$.

\noindent$\triangleright$ \textbf{Batch-GD} is the GD iteration \eqref{eq.gd1} that communicates with all the workers per iteration.

For LAG-WK, we choose $\xi_d=\xi={1}/{D}$ with $D=10$, and for LAG-PS, we choose more aggressive $\xi_d=\xi={10}/{D}$ with $D=10$.
Stepsizes for LAG-WK, LAG-PS, and GD are chosen as $\alpha=1/L$; to optimize performance and guarantee stability, stepsizes for Cyc-IAG and Num-IAG are chosen as $\alpha=1/(ML)$.
For the linear regression task, no regularization is added; for the logistic regression task, the $\ell_2$-regularization parameter is set to $\lambda=10^{-3}$.

\begin{figure}[t]
\vspace{-0.1cm}
\centering
\begin{tabular}{cc}
\includegraphics[width=6.5cm]{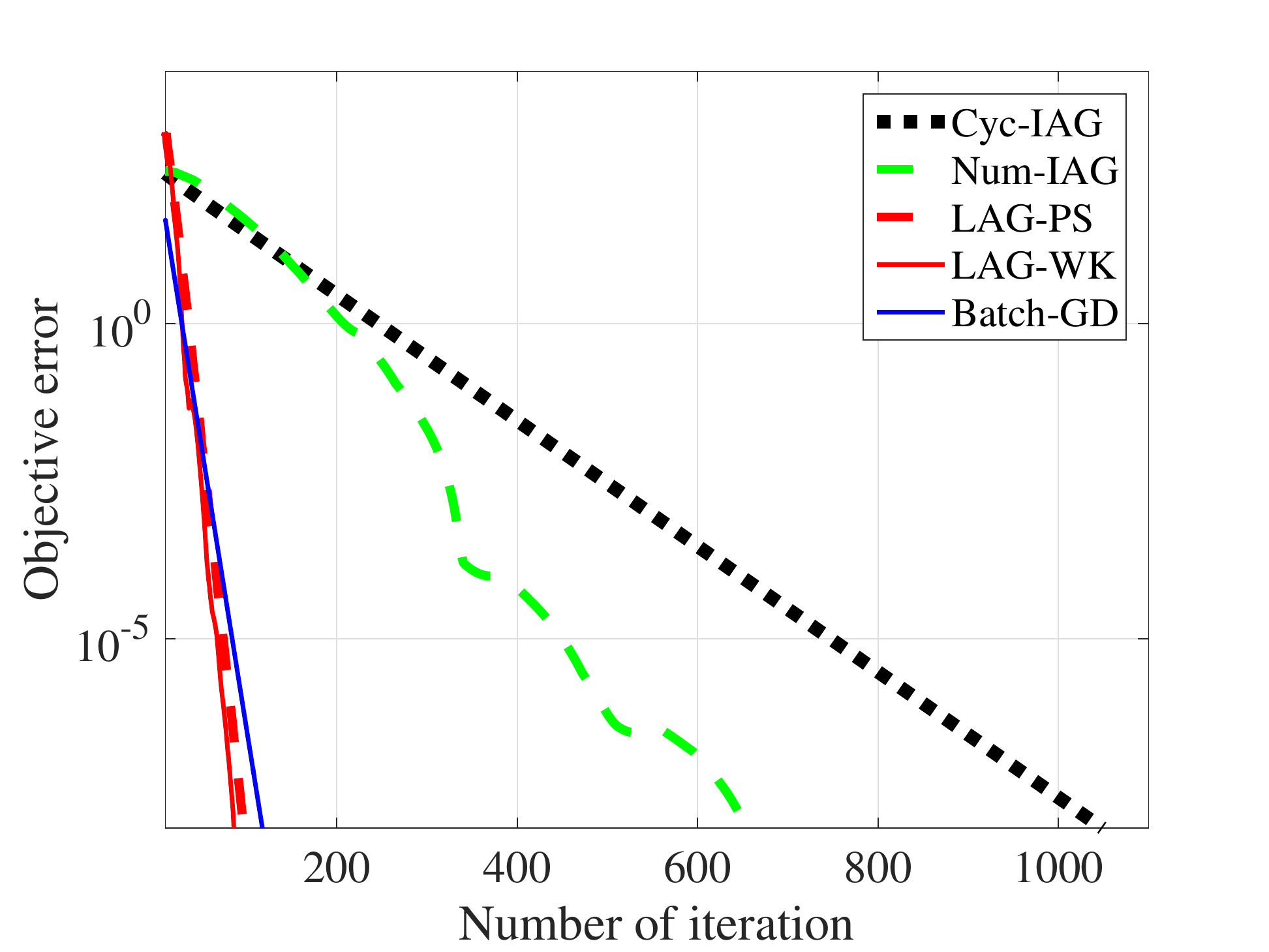}&
\includegraphics[width=6.5cm]{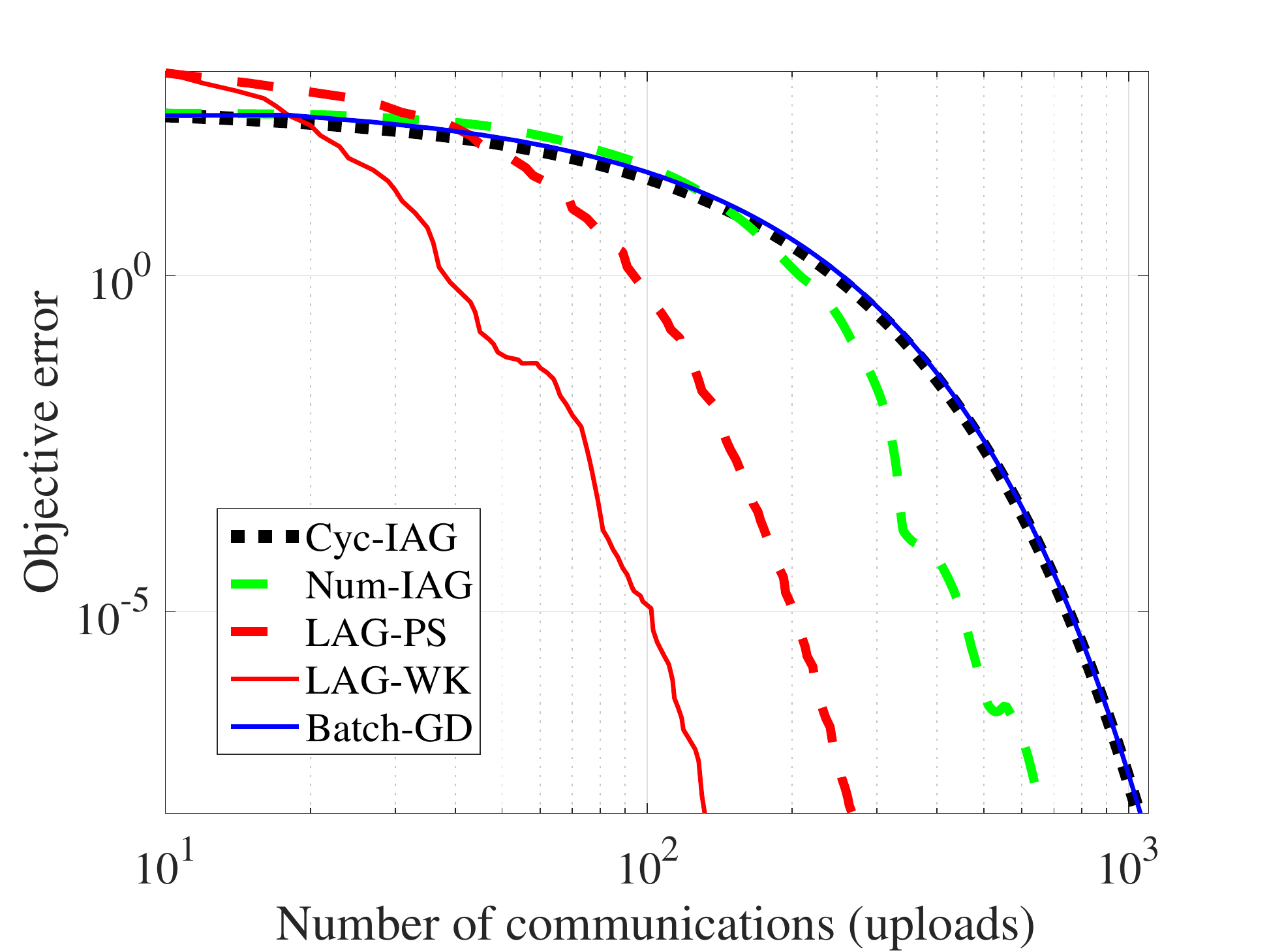}
\end{tabular}
\vspace*{-0.1cm}
  \caption{Iteration and communication complexity in synthetic datasets with increasing $L_m$.}
\label{fig:synfig1}
\vspace*{-0.2cm}
\end{figure}

\begin{figure}[t]
\centering
\begin{tabular}{cc}
\includegraphics[width=6.5cm]{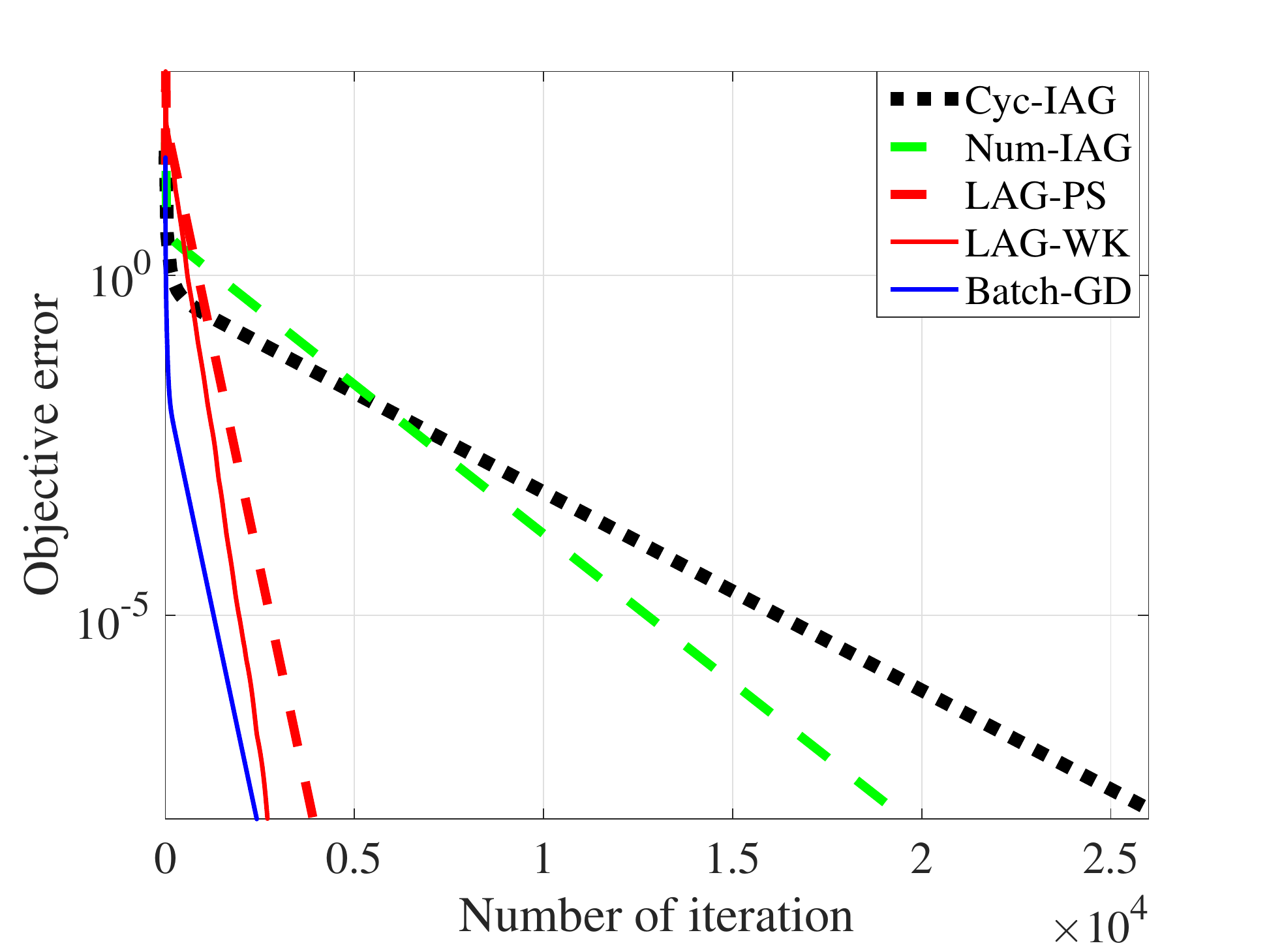}&
\includegraphics[width=6.5cm]{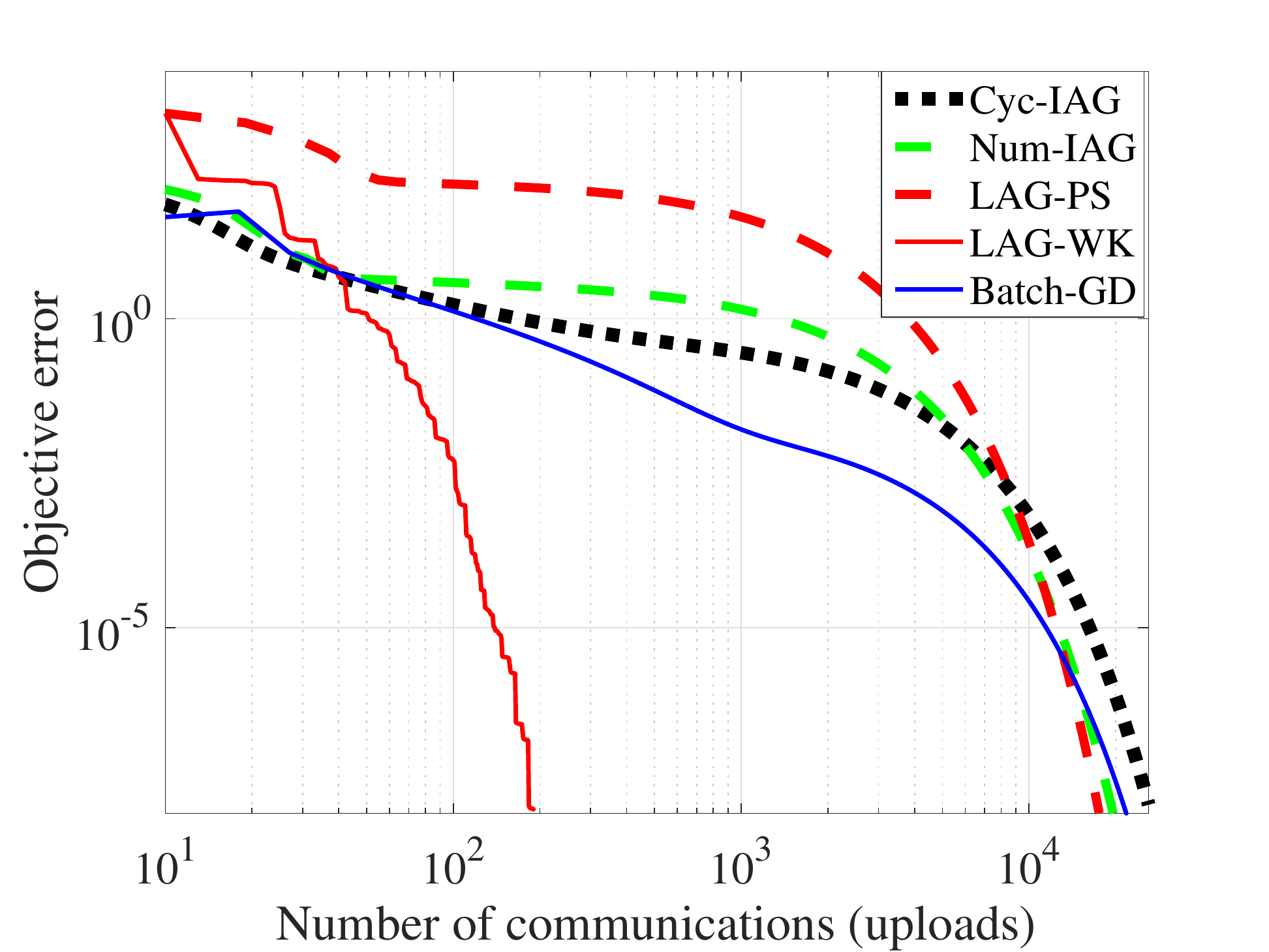}
\end{tabular}
\vspace*{-0.1cm}
  \caption{Iteration and communication complexity in synthetic datasets with uniform $L_m$.}
\label{fig:synfig2}
\vspace*{-0.2cm}
\end{figure}

\begin{figure}[t]
\centering
\begin{tabular}{cc}
\includegraphics[width=6.5cm]{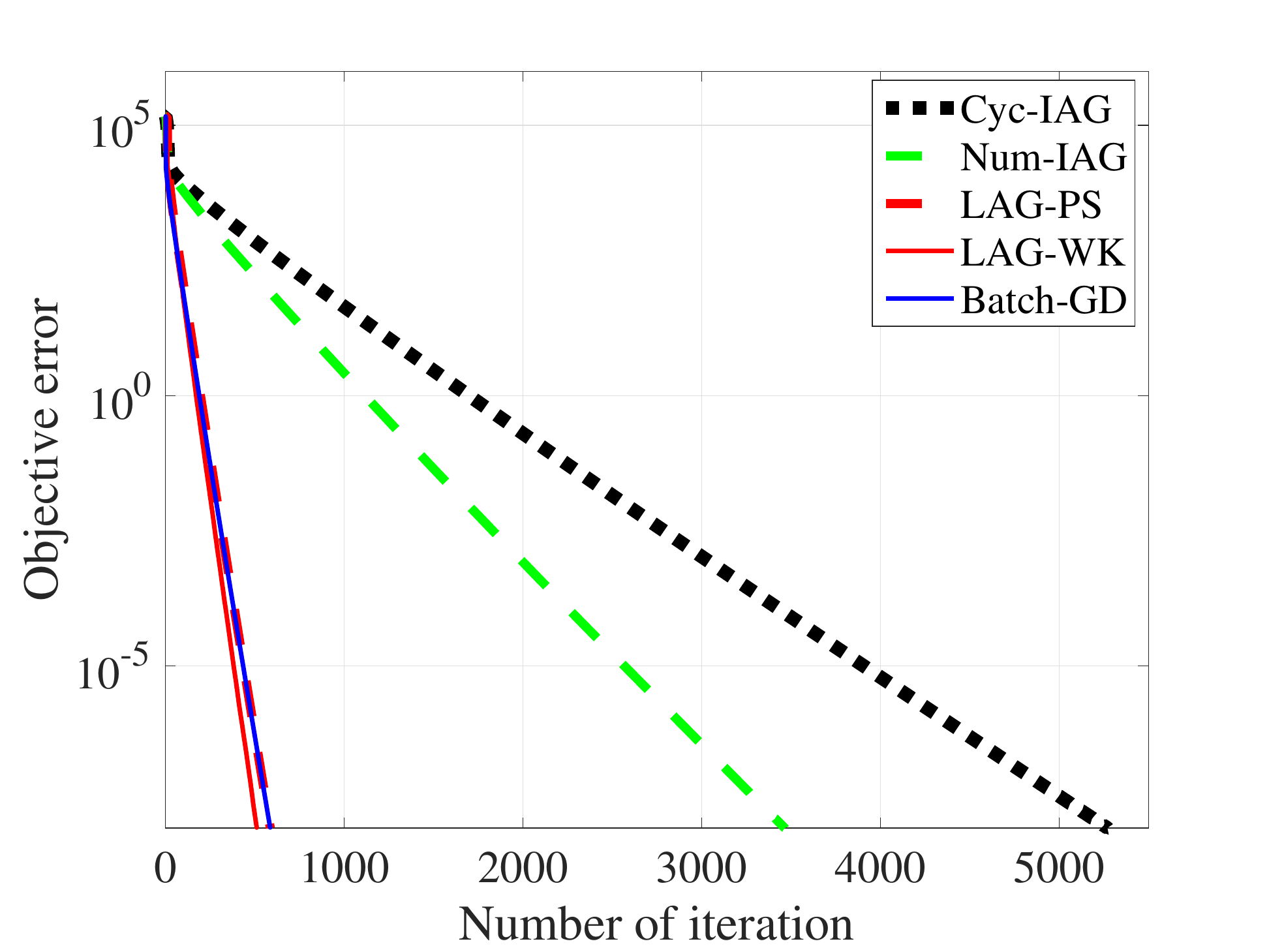}&
\includegraphics[width=6.5cm]{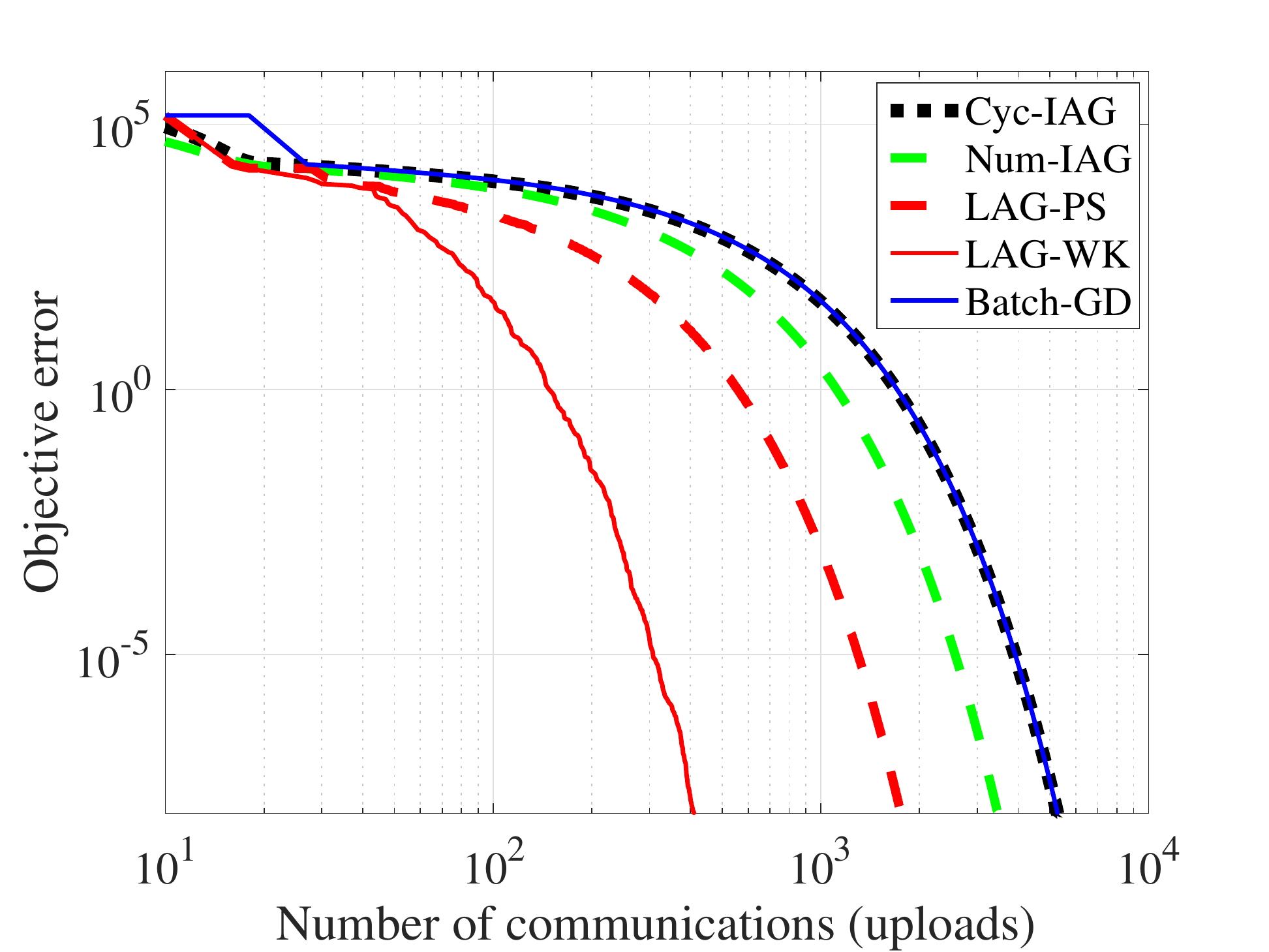}
\end{tabular}
\vspace*{-0.1cm}
  \caption{Iteration and communication complexity for linear regression in real datasets.}
\label{fig:realfig0}
\vspace*{-0.2cm}
\end{figure}

\begin{figure}[t]
\centering
\begin{tabular}{cc}
\includegraphics[width=6.5cm]{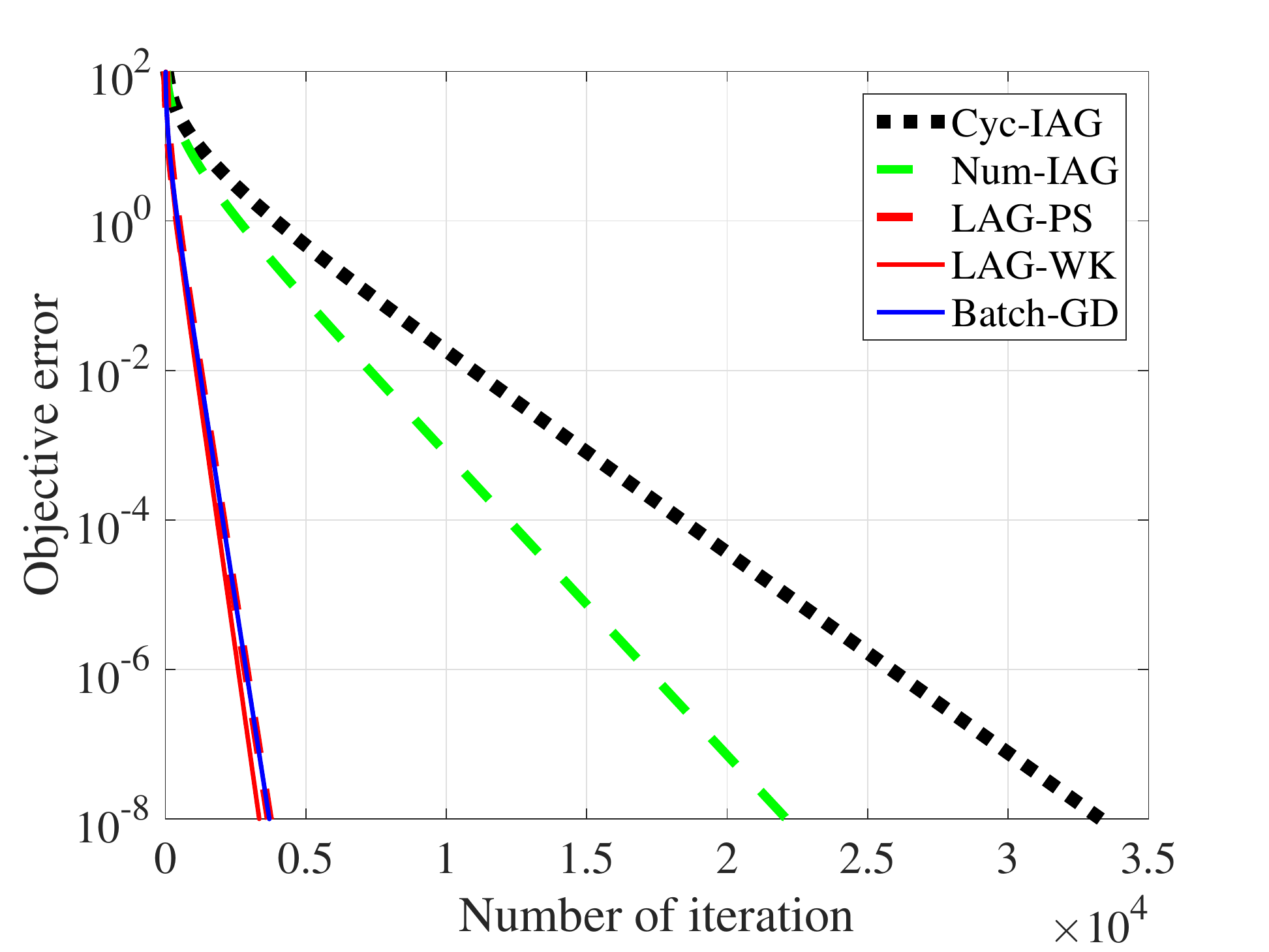}&
\includegraphics[width=6.5cm]{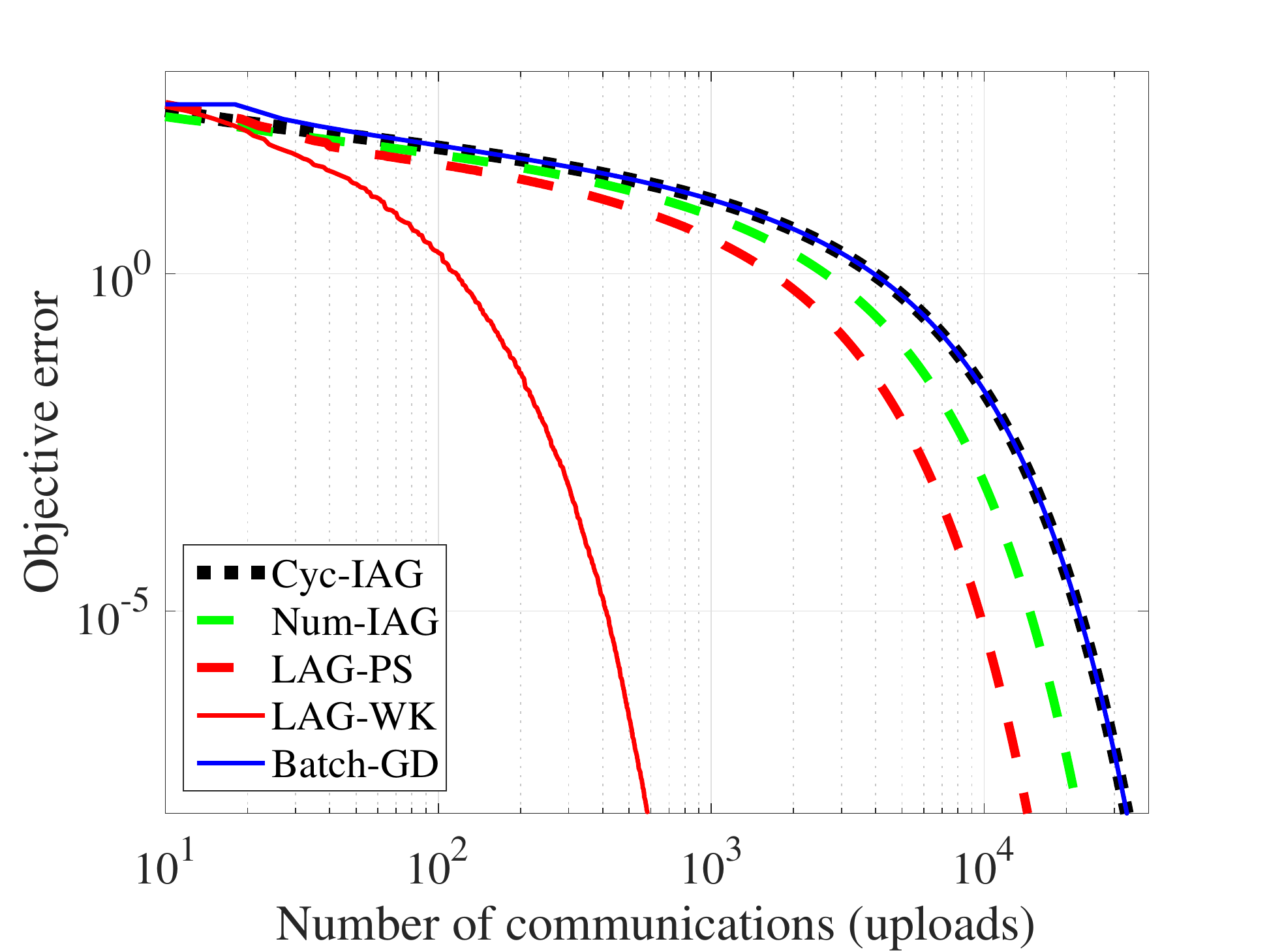}
\end{tabular}
\vspace*{-0.1cm}
  \caption{Iteration and communication complexity for logistic regression in real datasets.}
\label{fig:realfig1}
\vspace*{-0.2cm}
\end{figure}

We consider two \textbf{synthetic data} tests: a) linear regression with increasing smoothness constants, e.g., $L_m=(1.3^{m-1}+1)^2,\,\forall m$; and, b) logistic regression with uniform smoothness constants, e.g., $L_1=\ldots=L_9=4$.
For each worker, we generate 50 samples $\mathbf{x}_n\in \mathbb{R}^{50}$ from the standard Gaussian distribution, and rescale the data to mimic the increasing and uniform smoothness constants.
For the case of increasing $L_m$, it is not surprising that both LAG variants need fewer communication rounds; see Figure \ref{fig:synfig1}.
Interesting enough, for uniform $L_m$, LAG-WK still has marked improvements on communication, thanks to its ability of exploiting the \emph{hidden} smoothness of the loss functions; that is, the local curvature of ${\cal L}_m$ may not be as steep as $L_m$; see Figure \ref{fig:synfig2}.

\begin{table}[h!]
 \small
\begin{center}
\vspace{-0.1cm}
 \begin{tabular}[t]{ |c | c | c |c| }
\hline
     \textbf{Dataset} & \textbf{\# features} ($d$) & \textbf{\# samples} ($N$)& \textbf{worker index} \\ \hline
    Housing & $13$ &$506$  & 1,2,3\\ \hline
    Body fat& $14$  & $252$ &  4,5,6 \\ \hline
    Abalone& $8$ & $417$& 7,8,9 \\ \hline
      \end{tabular}
\end{center}
\vspace{-0.4cm}
\caption{A summary of real datasets used in the linear regression tests.}
\label{tab:1}
\vspace{-0.1cm}
\end{table}

 \begin{table}[h!]
  \small
\begin{center}
\vspace{-0.3cm}
 \begin{tabular}[t]{ |c | c | c |c| }
\hline
     \textbf{Dataset} & \textbf{\# features} ($d$) & \textbf{\# samples} ($N$)& \textbf{worker index} \\ \hline
    Ionosphere   & $34$ &$351$& 1,2,3\\ \hline
    Adult fat& $113$ &$1605$  &  4,5,6 \\ \hline
    Derm & $34$ & $358$& 7,8,9 \\ \hline
      \end{tabular}
\end{center}
\vspace{-0.4cm}
\caption{A summary of real datasets used in the logistic regression tests.}
\label{tab:2}
\vspace{-0.1cm}
\end{table}

Performance is also tested on the \textbf{real datasets} \citep{Lichman2013}: a) linear regression using \textbf{Housing}, \textbf{Body fat}, \textbf{Abalone} datasets; and, b) logistic regression using \textbf{Ionosphere}, \textbf{Adult}, \textbf{Derm} datasets; see Figures \ref{fig:realfig0}-\ref{fig:realfig1}.
Each dataset is evenly split into three workers with the number of features used in the test equal to the minimal number of features among all datasets; see the summaries of datasets in Tables \ref{tab:1} and \ref{tab:2}, while the details are deferred to Appendix \ref{appsec.data}.
In all tests, LAG-WK outperforms the alternatives in terms of both metrics, especially reducing the needed communication rounds by several orders of magnitude.
Its needed communication rounds can be even \emph{smaller} than the number of iterations, if none of workers violate the trigger condition \eqref{eq.trig-cond} at certain iterations.
Additional tests on real datasets under different number of workers are listed in Table \ref{tab:workercomp}.
Under all the tested settings, LAG-WK consistently achieves the lowest communication complexity, which corroborates the effectiveness of LAG when it comes to communication reduction.
\begin{table}
\small
\begin{center}
 \begin{tabular}{ c || c |c | c ||c |c |c }
\hline \hline
&\multicolumn{3}{|c||}{\textbf{Linear regression}}& \multicolumn{3}{|c}{\textbf{Logistic regression}}\\ \hline \textbf{Algorithm}
     &~~~$M=9$~~~&~~~$M=18$~~~& ~~~$M=27$~~~&~~~$M=9$~~~&~~~$M=18$~~~& ~~~$M=27$~~~\\ \hline \hline
    Cyclic-IAG& $5271$ &$10522$  & $15773$ &$ 33300$& $65287$ &$97773$ \\ \hline
    Num-IAG & $3466$ & $5283$ & $5815$  &$22113$& $30540$ &$37262$\\ \hline
    \textbf{LAG-PS} & $\mathbf{1756}$ &$\mathbf{3610}$  &  $\mathbf{5944}$& $\mathbf{14423}$ & $\mathbf{29968}$ &$\mathbf{44598}$\\ \hline
    \textbf{LAG-WK}& $\mathbf{412}$  & $\mathbf{657}$ & $\mathbf{1058}$  &$\mathbf{584}$  & $\mathbf{1098}$ &$\mathbf{1723}$ \\ \hline
    Batch GD & $5283$ &  $10548$ & $15822$ & $ 33309$ & $65322$ &$97821$\\ \hline\hline
    \end{tabular}
\end{center}
\vspace{-0.3cm}
\caption{Communication complexity to achieve accuracy $\epsilon=10^{-8}$ under different number of workers.}
\label{tab:workercomp}
\vspace{-0.1cm}
\end{table}

\begin{figure}[t]
\centering
\begin{tabular}{cc}
\includegraphics[width=6.5cm]{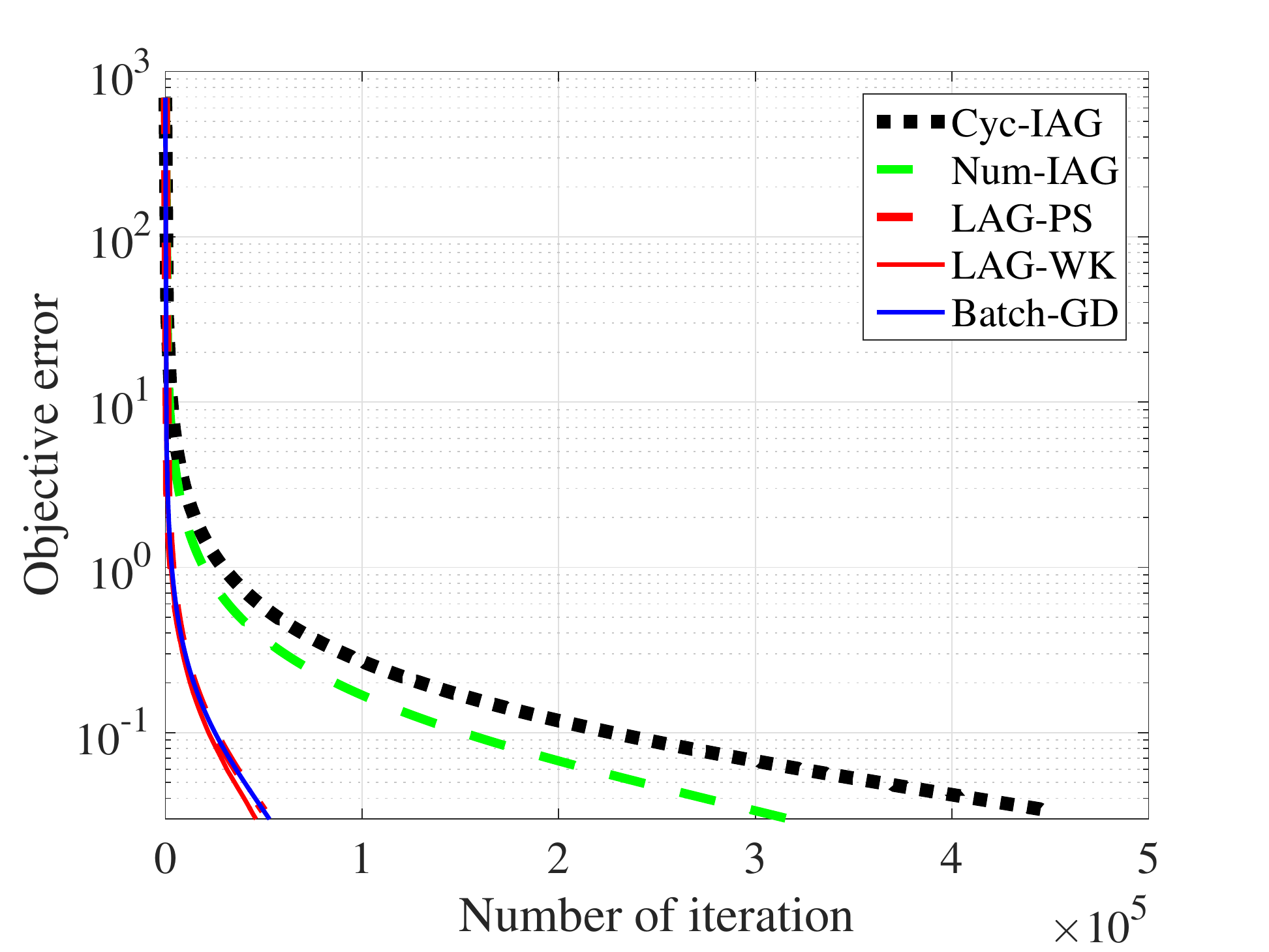}&
\includegraphics[width=6.5cm]{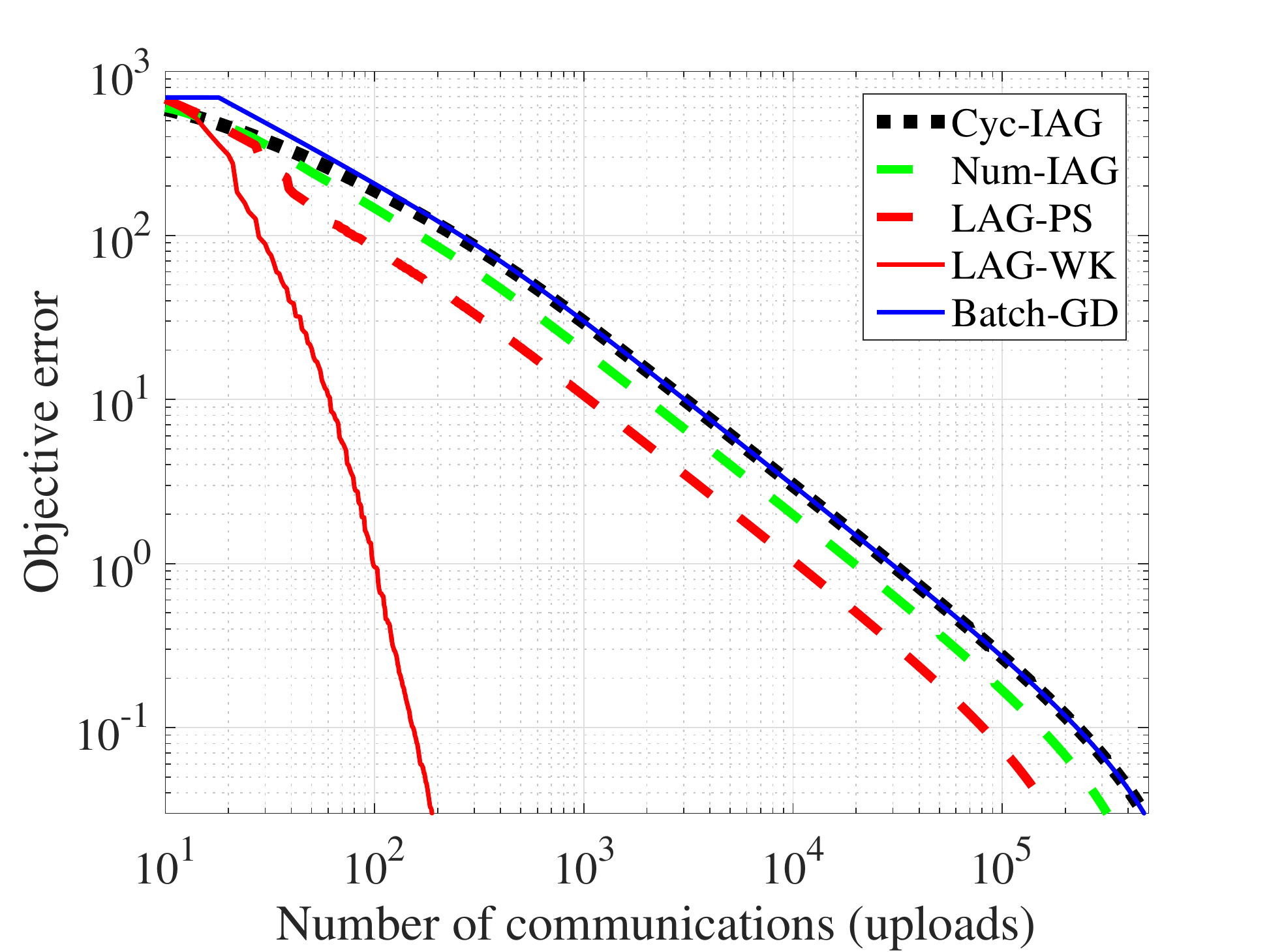}
\end{tabular}
\vspace*{-0.1cm}
  \caption{Iteration and communication complexity in Gisette dataset.}
\label{fig:realfig2}
\vspace*{-0.2cm}
\end{figure}

Similar performance gain has also been observed in the test on a larger dataset \textbf{Gisette}.
The Gisette dataset was constructed from the MNIST data \citep{lecun1998}. After random selecting subset of samples and eliminating all-zero features, it contains $2000$ samples $\mathbf{x}_n\in \mathbb{R}^{4837}$. We randomly split this dataset into nine workers.
The performance of all the algorithms is reported in Figure \ref{fig:realfig2} in terms of the iteration and communication complexity.
Clearly, LAG-WK and LAG-PS achieve the same iteration complexity as GD, and outperform Cyc- and Num-IAG.
Regarding communication complexity, two LAG variants reduce the needed communication rounds by several orders of magnitude compared with the alternatives.

\section{Conclusions}
Confirmed by the impressive empirical performance on both synthetic and real datasets, this paper developed a promising communication-cognizant method for distributed machine learning that we term Lazily Aggregated Gradient (LAG) approach. LAG can achieve the same convergence rates as batch gradient descent (GD) in smooth strongly-convex, convex, and nonconvex cases, and requires fewer communication rounds than GD given that the datasets at different workers are heterogeneous.
To overcome the limitations of LAG, our future work consists of incorporating smoothing techniques to handle nonsmooth loss functions, and robustifying our aggregation rules to deal with cyber attacks.


\clearpage
\appendix
\section{Proof of Lemma \ref{lemma1}}\label{appsec.appdest}
Using the smoothness of ${\cal L}(\cdot)$ in Assumption 1, we have that
\begin{equation}\label{eq.pf-lemma1-1}
	{\cal L}(\bbtheta^{k+1})-{\cal L}(\bbtheta^k)\leq \left\langle \nabla{\cal L}(\bbtheta^k), \bbtheta^{k+1}-\bbtheta^k\right\rangle+\frac{L}{2}\left\|\bbtheta^{k+1}-\bbtheta^k\right\|^2.
\end{equation}

Plugging \eqref{eq.LAG2} into $\left\langle \nabla{\cal L}(\bbtheta^k), \bbtheta^{k+1}-\bbtheta^k\right\rangle$ leads to (cf. $\hat{\bbtheta}_m^k=\hat{\bbtheta}_m^{k-1},\,\forall m\in{\cal M}^k_c$)
\begin{align}\label{eq.pf-lemma1-3}
&\Big\langle \nabla{\cal L}(\bbtheta^k), \bbtheta^{k+1}-\bbtheta^k\Big\rangle\nonumber\\
=&-\alpha\left\langle \nabla{\cal L}(\bbtheta^k),\nabla{\cal L}(\bbtheta^k)+\sum_{m\in{\cal M}^k_c}\left(\nabla{\cal L}_m\big(\hat{\bbtheta}_m^k\big)-\nabla{\cal L}_m\big(\bbtheta^k\big)\right)\right\rangle\nonumber\\
= &-\alpha\left\|\nabla{\cal L}(\bbtheta^k)\right\|^2 -\alpha\left\langle \nabla{\cal L}(\bbtheta^k),\sum_{m\in{\cal M}^k_c}\left(\nabla{\cal L}_m\big(\hat{\bbtheta}_m^k\big)-\nabla{\cal L}_m\big(\bbtheta^k\big)\right)\right\rangle\nonumber\\
=& -\alpha\left\|\nabla{\cal L}(\bbtheta^k)\right\|^2 +\left\langle -\sqrt{\alpha}\nabla{\cal L}(\bbtheta^k),\sqrt{\alpha}\sum_{m\in{\cal M}^k_c}\left(\nabla{\cal L}_m\big(\hat{\bbtheta}_m^k\big)-\nabla{\cal L}_m\big(\bbtheta^k\big)\right)\right\rangle.
\end{align}

Using $2\mathbf{a}^{\top}\mathbf{b}=\|\mathbf{a}\|^2+\|\mathbf{b}\|^2-\|\mathbf{a}-\mathbf{b}\|^2$, we can re-write the inner product in \eqref{eq.pf-lemma1-3} as
\begin{align}\label{eq.pf-lemma1-4}
&\left\langle -\sqrt{\alpha}\nabla{\cal L}(\bbtheta^k),\sqrt{\alpha}\sum_{m\in{\cal M}^k_c}\left(\nabla{\cal L}_m\big(\hat{\bbtheta}_m^k\big)-\nabla{\cal L}_m\big(\bbtheta^k\big)\right)\right\rangle\nonumber\\
=&\frac{\alpha}{2}\left\|\nabla{\cal L}(\bbtheta^k)\right\|^2+\frac{\alpha}{2}\Bigg\|\!\sum_{m\in{\cal M}^k_c}\!\Big(\nabla{\cal L}_m\big(\hat{\bbtheta}_m^k\big)-\nabla{\cal L}_m\big(\bbtheta^k\big)\!\Big)\Bigg\|^2\nonumber\\
&\qquad\qquad\qquad~~-\frac{1}{2}\Bigg\|\sqrt{\alpha}\nabla{\cal L}(\bbtheta^k)+\sqrt{\alpha}\sum_{m\in{\cal M}^k_c}\!\Big(\!\nabla{\cal L}_m\big(\hat{\bbtheta}_m^k\big)-\nabla{\cal L}_m\big(\bbtheta^k\big)\!\Big)\Bigg\|^2\nonumber\\
\stackrel{(a)}{=}&\frac{\alpha}{2}\left\|\nabla{\cal L}(\bbtheta^k)\right\|^2\!+\frac{\alpha}{2}\Bigg\|\!\sum_{m\in{\cal M}^k_c}\!\Big(\nabla{\cal L}_m\big(\hat{\bbtheta}_m^k\big)-\nabla{\cal L}_m\big(\bbtheta^k\big)\!\Big)\Bigg\|^2\!\!-\frac{1}{2\alpha}\left\|\bbtheta^{k+1}-\bbtheta^k\right\|^2
\end{align}
where (a) follows from the LAG update \eqref{eq.LAG2}.

Combining \eqref{eq.pf-lemma1-3} and \eqref{eq.pf-lemma1-4}, and plugging into \eqref{eq.pf-lemma1-1}, the claim of Lemma \ref{lemma1} follows.

\section{Proof of Lemma \ref{lemma2}}\label{appsec.dest}
Using the definition of $\mathbb{V}^k$ in \eqref{eq.Lyap}, it follows that
\begin{align}\label{eq.pf-lemma2-1}
\!\mathbb{V}^{k+1}\!-\mathbb{V}^k=&{\cal L}(\bbtheta^{k+1})-{\cal L}(\bbtheta^k)+\sum_{d=1}^D \beta_d\left\|\bbtheta^{k+2-d}-\bbtheta^{k+1-d}\right\|^2-\sum_{d=1}^D \beta_d\left\|\bbtheta^{k+1-d}-\bbtheta^{k-d}\right\|^2\nonumber\\
\stackrel{(a)}{\leq}&-\frac{\alpha}{2}\left\|\nabla{\cal L}(\bbtheta^k)\right\|^2\!+\!\frac{\alpha}{2}\Bigg\|\!\sum_{m\in{\cal M}^k_c}\!\!\!\Big(\nabla{\cal L}_m\big(\hat{\bbtheta}_m^k\big)\!-\!\nabla{\cal L}_m\big(\bbtheta^k\big)\!\Big)\Bigg\|^2\!\!+\sum_{d=2}^D \beta_d\left\|\bbtheta^{k+2-d}\!-\!\bbtheta^{k+1-d}\right\|^2\nonumber\\
&+\left(\frac{L}{2}-\frac{1}{2\alpha}+\beta_1\right)\left\|\bbtheta^{k+1}-\bbtheta^k\right\|^2-\sum_{d=1}^D \beta_d\left\|\bbtheta^{k+1-d}-\bbtheta^{k-d}\right\|^2
\end{align}
where (a) uses \eqref{eq.lemma1} in Lemma \ref{lemma1}.

Decomposing the square distance as
\begin{align}\label{eq.pf-lemma2-2}
\!\!\!\left\|\bbtheta^{k+1}-\bbtheta^k\right\|^2\!\!= &\Bigg\|\alpha \nabla {\cal L}(\bbtheta^k)+\alpha\sum_{m\in{\cal M}^k_c}\left(\nabla{\cal L}_m\big(\hat{\bbtheta}_m^k\big)-\nabla{\cal L}_m\big(\bbtheta^k\big)\right) \Bigg\|^2\nonumber\\
	\stackrel{(b)}{\leq} &\left(1+\rho\right)\alpha^2\left\|\nabla {\cal L}(\bbtheta^k)\right\|^2\!+\!\left(1+\rho^{-1}\right)\alpha^2\Bigg\|\sum_{m\in{\cal M}^k_c}\!\!\left(\nabla{\cal L}_m\big(\hat{\bbtheta}_m^k\big)\!-\!\nabla{\cal L}_m\big(\bbtheta^k\big)\right)\!\Bigg\|^2\!
\end{align}
where (b) follows from Young's inequality.
Plugging \eqref{eq.pf-lemma2-2} into \eqref{eq.pf-lemma2-1}, we arrive at
\begin{align}\label{eq.pf-lemma2-3}
\!\!\mathbb{V}^{k+1}-\mathbb{V}^k\leq &\left(\left(\frac{L}{2}-\frac{1}{2\alpha}+\beta_1\right)\left(1+\rho\right)\alpha^2-\frac{\alpha}{2}\right)\left\|\nabla {\cal L}(\bbtheta^k)\right\|^2\nonumber\\
+&\sum_{d=1}^{D-1} \left(\beta_{d+1}-\beta_d\right)\left\|\bbtheta^{k+1-d}-\bbtheta^{k-d}\right\|^2-\beta_D\left\|\bbtheta^{k+1-D}-\bbtheta^{k-D}\right\|^2\nonumber\\
+&\left(\left(\frac{L}{2}-\frac{1}{2\alpha}+\beta_1\right)\left(1+\rho^{-1}\right)\alpha^2+\frac{\alpha}{2}\right)\!\Bigg\|\!\sum_{m\in{\cal M}^k_c}\!\!\left(\nabla{\cal L}_m\big(\hat{\bbtheta}_m^k\big)\!-\!\nabla{\cal L}_m\big(\bbtheta^k\big)\right)\!\Bigg\|^2\!.\!
\end{align}

Using $(\sum_{n=1}^N a_n)^2\leq N\sum_{n=1}^N a_n^2$, it follows that
\begin{subequations}\label{eq.pf-lemma2-4}
	\begin{align}
\Bigg\|\sum_{m\in{\cal M}^k_c}\left(\nabla{\cal L}_m\big(\hat{\bbtheta}_m^k\big)-\nabla{\cal L}_m\big(\bbtheta^k\big)\right)\Bigg\|^2\leq &\left|{\cal M}^k_c\right|\sum_{m\in{\cal M}^k_c}\Big\|\nabla{\cal L}_m\big(\hat{\bbtheta}_m^k\big)-\nabla{\cal L}_m\big(\bbtheta^k\big)\Big\|^2\label{eq.pf-lemma2-4-1}\\
\stackrel{(c)}{\leq} &\,\left|{\cal M}^k_c\right|\sum_{m\in{\cal M}^k_c}L_m^2\Big\|\hat{\bbtheta}_m^k-\bbtheta^k\Big\|^2\label{eq.pf-lemma2-4-2}\\
\stackrel{(d)}{\leq}&\frac{|{\cal M}^k_c|^2}{\alpha^2|{\cal M}|^2}\sum_{d=1}^D \xi_d\left\|\bbtheta^{k+1-d}-\bbtheta^{k-d}\right\|^2\label{eq.pf-lemma2-4-3}
\end{align}
\end{subequations}
where (c) follows the smoothness condition in Assumption 1, and (d) uses the trigger condition \eqref{eq.trig-cond1} if we derive from \eqref{eq.pf-lemma2-4-1} to \eqref{eq.pf-lemma2-4-3}, uses \eqref{eq.trig-cond2} if we derive from \eqref{eq.pf-lemma2-4-2} to \eqref{eq.pf-lemma2-4-3}.

Plugging \eqref{eq.pf-lemma2-4} into \eqref{eq.pf-lemma2-3}, we have
\begin{align}\label{eq.pf-lemma2}
&\mathbb{V}^{k+1}-\mathbb{V}^k \nonumber\\
\!\!\leq &\left(\left(\frac{L}{2}-\frac{1}{2\alpha}+\beta_1\right)\left(1+\rho\right)\alpha^2-\frac{\alpha}{2}\right)\left\|\nabla{\cal L}(\bbtheta^k)\right\|^2\nonumber\\
\!\!+&\sum_{d=1}^{D-1}\!\left(\!\!\left(\!\left(\frac{L}{2}-\frac{1}{2\alpha}+\beta_1\right)\!\left(1+\rho^{-1}\right)\alpha^2+\frac{\alpha}{2}\right)\!\frac{\xi_d\left|{\cal M}^k_c\right|^2}{\alpha^2|{\cal M}|^2}-\beta_d+\beta_{d+1}\right)\!\left\|\bbtheta^{k+1-d}\!-\bbtheta^{k-d}\right\|^2\nonumber\\
\!\!+&\left(\!\!\left(\!\left(\frac{L}{2}-\frac{1}{2\alpha}+\beta_1\right)\!\left(1+\rho^{-1}\right)\alpha^2+\frac{\alpha}{2}\right)\!\frac{\xi_D\left|{\cal M}^k_c\right|^2}{\alpha^2|{\cal M}|^2}-\beta_D\right)\left\|\bbtheta^{k+1-D}-\bbtheta^{k-D}\right\|^2.
\end{align}
After defining some constants to simplify the notation, the proof is then complete.

Furthermore, if the stepsize $\alpha$, parameters $\{\beta_d\}$, and the trigger constants $\{\xi_d\}$ satisfy
\begin{subequations}\label{eq.beta}
\begin{align}
\left(\frac{L}{2}-\frac{1}{2\alpha}+\beta_1\right)\left(1+\rho\right)\alpha^2-\frac{\alpha}{2}&\leq 0\label{eq.beta-1}\\
\left(\left(\frac{L}{2}-\frac{1}{2\alpha}+\beta_1\right)\left(1+\rho^{-1}\right)\alpha^2+\frac{\alpha}{2}\right)\frac{\xi_d\left|{\cal M}^k_c\right|^2}{\alpha^2|{\cal M}|^2}-\beta_d+\beta_{d+1} &\leq 0,\,\forall d=1,\ldots,D-1\label{eq.beta-2}\\
\left(\left(\frac{L}{2}-\frac{1}{2\alpha}+\beta_1\right)\!\left(1+\rho^{-1}\right)\alpha^2+\frac{\alpha}{2}\right)\!\frac{\xi_D\left|{\cal M}^k_c\right|^2}{\alpha^2|{\cal M}|^2}-\beta_D&\leq 0\label{eq.beta-3}
\end{align}
\end{subequations}
then Lyapunov function is non-increasing; that is, $\mathbb{V}^{k+1}\leq\mathbb{V}^k$.

\paragraph{Choice of parameters.}
We discuss several choices of parameters that satisfy \eqref{eq.beta}.

\noindent$\bullet$ If $\beta_1=\frac{1-\alpha L}{2\alpha}$ so that $\frac{L}{2}-\frac{1}{2\alpha}+\beta_1=0$, after rearranging terms, \eqref{eq.beta} is equivalent to
\begin{align}\label{eq.beta-cond5}
\alpha \leq \frac{1}{L};~~\xi_d \leq \frac{2\alpha(\beta_d-\beta_{d+1})|{\cal M}|^2}{|{\cal M}^k_c|^2},\,\forall d\in[1,D-1];~~\xi_D\!\leq \frac{2\alpha\beta_D|{\cal M}|^2}{|{\cal M}^k_c|^2}.
\end{align}

\noindent$\bullet$ If $\beta_1\neq \frac{1-\alpha L}{2\alpha}$, after rearranging terms, \eqref{eq.beta} is equivalent to
\begin{subequations}\label{eq.beta-cond}
\begin{align}
\alpha &\leq \frac{1+(1+\rho)^{-1}}{L+2\beta_1};\label{eq.beta-cond-1}\\
\xi_d &\leq \frac{2\alpha(\beta_d-\beta_{d+1})|{\cal M}|^2}{\left((1+\rho^{-1})(2\alpha\beta_1+\alpha L-1)+1\right)|{\cal M}^k_c|^2},~~~ d=1,\ldots,D-1\label{eq.beta-cond-2}\\
\xi_D\!&\leq \frac{2\alpha\beta_D|{\cal M}|^2}{\left((1+\rho^{-1})(2\alpha\beta_1+\alpha L-1)+1\right)|{\cal M}^k_c|^2}.\label{eq.beta-cond-3}
\end{align}
\end{subequations}
i) If $\rho\rightarrow 0$ and $\beta_1\rightarrow 0$, \eqref{eq.beta-cond-1} becomes $0\leq \alpha \leq 2/L$, matching the stepsize region of GD.\\
ii) If $\alpha=1/L$ and $\beta_1>0$, \eqref{eq.beta-cond-2} and \eqref{eq.beta-cond-3} reduce to
\begin{equation}\label{eq.beta-cond4}
\xi_d \leq \frac{2\alpha(\beta_d-\beta_{d+1})|{\cal M}|^2}{(2\alpha\beta_1(1+\rho^{-1})+1)|{\cal M}^k_c|^2}~~~{\rm and}~~~\xi_D\!\leq \frac{2\alpha\beta_D|{\cal M}|^2}{\left(2\alpha\beta_1(1+\rho^{-1})+1\right)|{\cal M}^k_c|^2}.	
\end{equation}

Since \eqref{eq.beta-cond5} is in a simpler form, we will use this choice in the subsequent iteration and communication analysis for brevity.

\section{Proof of Theorem \ref{theorem2}}\label{appsec.scvx}
Using Lemma \ref{lemma2}, it follows that (with $\tilde{c}(\alpha,\beta_1):=\frac{L}{2}-\frac{1}{2\alpha}+\beta_1$)
\begin{align}\label{eq.pf-theorem2-2}
&\mathbb{V}^{k+1}-\mathbb{V}^k \nonumber\\
\!\!\leq &-\left(\frac{\alpha}{2}-\tilde{c}(\alpha,\beta_1)\left(1+\rho\right)\alpha^2\right)\left\|\nabla{\cal L}(\bbtheta^k)\right\|^2\nonumber\\
\!\!-&\left(\beta_D-\left(\tilde{c}(\alpha,\beta_1)\left(1+\rho^{-1}\right)\alpha^2+\frac{\alpha}{2}\right)\!\frac{\xi_D\left|{\cal M}^k_c\right|^2}{\alpha^2|{\cal M}|^2}\right)\left\|\bbtheta^{k+1-D}-\bbtheta^{k-D}\right\|^2\nonumber\\
\!\!-&\sum_{d=1}^{D-1}\!\left(\!\beta_d-\beta_{d+1}-\left(\tilde{c}(\alpha,\beta_1)\left(1+\rho^{-1}\right)\alpha^2+\frac{\alpha}{2}\right)\!\frac{\xi_d\left|{\cal M}^k_c\right|^2}{\alpha^2|{\cal M}|^2}\right)\!\left\|\bbtheta^{k+1-d}\!-\bbtheta^{k-d}\right\|^2\!\nonumber\\
\!\!\stackrel{(a)}{\leq}&\!-\!\!\left(\alpha\mu\!-\!2\tilde{c}(\alpha,\beta_1)\left(1+\rho\right)\mu\alpha^2\right)\!\!\left({\cal L}(\bbtheta^k)\!-\!{\cal L}(\bbtheta^*)\right)\nonumber\\
\!\!-&\left(\beta_D-\left(\tilde{c}(\alpha,\beta_1)\left(1+\rho^{-1}\right)\alpha^2+\frac{\alpha}{2}\right)\!\frac{\xi_D\left|{\cal M}^k_c\right|^2}{\alpha^2|{\cal M}|^2}\right)\left\|\bbtheta^{k+1-D}-\bbtheta^{k-D}\right\|^2\nonumber\\
\!\!-&\sum_{d=1}^{D-1}\!\left(\beta_d-\beta_{d+1}\!-\!\left(\tilde{c}(\alpha,\beta_1)\left(1+\rho^{-1}\right)\alpha^2\!+\frac{\alpha}{2}\right)\!\frac{\xi_d\left|{\cal M}^k_c\right|^2}{\alpha^2|{\cal M}|^2}\right)\!\left\|\bbtheta^{k+1-d}\!-\bbtheta^{k-d}\right\|^2\!\!
\end{align}
where (a) uses the strong convexity or the PL condition in Assumption 3, e.g.,
\begin{equation}\label{eq.scvx}
2\mu\left({\cal L}(\bbtheta^k)-{\cal L}(\bbtheta^*)\right)\leq \left\|\nabla{\cal L}(\bbtheta^k)\right\|^2.
\end{equation}
With the constant $c(\alpha;\{\xi_d\})$ defined as
\begin{align}\label{eq.pf-theorem2-3}
\!\!\!c(\alpha;\{\xi_d\})\!:=\min_k\min_{d=1,\ldots,D-1}  \Bigg\{&\alpha\mu-2\tilde{c}(\alpha,\beta_1)\left(1+\rho\right)\mu\alpha^2, 1\!-\!\left(\tilde{c}(\alpha,\beta_1)\left(1+\rho^{-1}\right)\alpha^2+\frac{\alpha}{2}\right)\frac{\xi_D\left|{\cal M}^k_c\right|^2}{\alpha^2\beta_D|{\cal M}|^2},\nonumber\\
&1-\frac{\beta_{d+1}}{\beta_d}-\left(\tilde{c}(\alpha,\beta_1)\left(1+\rho^{-1}\right)\alpha^2+\frac{\alpha}{2}\right)\frac{\xi_d\left|{\cal M}^k_c\right|^2}{\alpha^2\beta_d|{\cal M}|^2}\Bigg\}
\end{align}
we have from \eqref{eq.pf-theorem2-2} that
\begin{align}\label{eq.pf-theorem2-4}
\mathbb{V}^{k+1}-\mathbb{V}^k\stackrel{(b)}{\leq}&-c(\alpha;\{\xi_d\})\left({\cal L}(\bbtheta^k)-{\cal L}(\bbtheta^*)+\sum_{d=1}^D \beta_d\left\|\bbtheta^{k+1-d}-\bbtheta^{k-d}\right\|^2\right)\nonumber\\
=&-c(\alpha;\{\xi_d\})\mathbb{V}^k.
\end{align}

Rearranging terms in \eqref{eq.pf-theorem2-4}, we can conclude that
\begin{equation}\label{eq.pf-linear-const}
	\mathbb{V}^{k+1}\leq\left(1-c(\alpha;\{\xi_d\})\right)\mathbb{V}^k.
\end{equation}
The $Q$-linear convergence of $\mathbb{V}^k$ implies the $R$-linear convergence of ${\cal L}(\bbtheta^k)\!-\!{\cal L}(\bbtheta^*)$.\\ 
The proof is then complete.

\paragraph{Iteration complexity.}
Since the linear rate constant in \eqref{eq.pf-linear-const} is in a complex form, we discuss the iteration complexity under a set of specific parameters (not necessarily optimal).
Specifically, we choose
\begin{equation}\label{eq.para-set1}
\xi_1=\ldots=\xi_D:=\xi<\frac{1}{D}	~~~~~~{\rm and}~~~~~~\alpha:=\frac{1-D\xi/\eta}{L} ~~~~~~{\rm and}~~~~~~\beta_d:=\frac{(D-d+1)\xi}{2\alpha \eta},\,\forall d=1,\cdots,D
\end{equation}
where $\eta$ is a constant. Clearly, \eqref{eq.para-set1} satisfies the condition in \eqref{eq.beta-cond5}.

Plugging \eqref{eq.para-set1} into \eqref{eq.pf-theorem2-3}, we have (cf. $\tilde{c}(\alpha,\beta_1)=0$)
\begin{equation}\label{eq.para-set2}
\Gamma:=1-c(\alpha;\{\xi_d\})=\max_k\max_{d=1,\ldots,D}\left\{1-\frac{1-D\xi/\eta}{\kappa},\,\frac{\eta \left|{\cal M}^k_c\right|^2}{\left|{\cal M}\right|^2},\, \frac{D-d+\eta \left|{\cal M}^k_c\right|^2/\left|{\cal M}\right|^2}{D-d+1}\right\}.\!\!
\end{equation}
If we choose $\eta:=\sqrt{D\xi}$ such that $\frac{\eta \left|{\cal M}^k_c\right|^2}{\left|{\cal M}\right|^2}<1$, we can simplify \eqref{eq.para-set2} as
\begin{equation}\label{eq.para-set3}
		\Gamma=\max_k\left\{1-\frac{1-\sqrt{D\xi}}{\kappa},\, \frac{D-1+\sqrt{D\xi} \left|{\cal M}^k_c\right|^2/\left|{\cal M}\right|^2}{D}  \right\}\stackrel{(a)}{=}1-\frac{1-\sqrt{D\xi}}{\kappa}.
\end{equation}
where (a) holds since we choose $D\leq \kappa$.
With the linear convergence rate in \eqref{eq.para-set3}, we can derive the iteration complexity as
\begin{align}\label{eq.pf-iter-cplx}
&	\frac{\mathbb{V}^K}{\mathbb{V}^0}\leq\left(1-\frac{1-\sqrt{D\xi}}{\kappa}\right)^K\leq \epsilon\nonumber\\
\Longrightarrow & K \log\left(1-\frac{1-\sqrt{D\xi}}{\kappa}\right)\leq \log\left(\epsilon\right)\nonumber\\
\Longrightarrow &\log\left(\frac{1}{\epsilon}\right)\leq K \log\left(1-\frac{1-\sqrt{D\xi}}{\kappa}\right)^{-1}\stackrel{(b)}{\leq} \frac{K}{\frac{\kappa}{1-\sqrt{D\xi}}-1}\nonumber\\
\Longrightarrow & K\geq \frac{\kappa}{1-\sqrt{D\xi}}\log\left(\epsilon^{-1}\right)
\end{align}
where (b) uses $\log (1+x)\leq x,\, \forall x>-1$.
Therefore, we can conclude that $\mathbb{I}_{\rm LAG}(\epsilon)=\frac{\kappa}{1-\sqrt{D\xi}}\log\left(\epsilon^{-1}\right)$.

\section{Proof of Lemma \ref{lemma4}}\label{appsec.lemma4}
The idea is essentially to show that if \eqref{eq.lemma4} holds, then for any iteration $k$, the worker $m$ will not violate the trigger conditions in \eqref{eq.trig-cond} so that does not communicate with the server at the current iteration, if it has communicated with the server at least once during the previous consecutive $d$ iterations.

Suppose at iteration $k$, the most recent iteration that the worker $m$ did communicate with the server is iteration $k-d'$ with $1\leq d'\leq d$.
Thus, we have $\hat{\bbtheta}_m^{k-1}=\bbtheta^{k-d'}$, which implies that
\begin{align}\label{eq.lemma4-1}
L_m^2\left\|\hat{\bbtheta}_m^{k-1}-\bbtheta^k\right\|^2&=L_m^2\left\|\bbtheta^{k-d'}-\bbtheta^k\right\|^2\nonumber\\
&=d'L^2\mathds{H}^2(m)\sum_{b=1}^{d'}\left\|\bbtheta^{k+1-b}-\bbtheta^{k-b}\right\|^2\nonumber\\
&\stackrel{(a)}{\leq} \frac{\xi_d}{\alpha^2|{\cal M}|^2}\sum_{b=1}^{d'}\left\|\bbtheta^{k+1-b}-\bbtheta^{k-b}\right\|^2\nonumber\\
&\stackrel{(b)}{\leq} \frac{\sum_{b=1}^{d'}\xi_b\left\|\bbtheta^{k+1-b}-\bbtheta^{k-b}\right\|^2}{\alpha^2|{\cal M}|^2}+\frac{\sum_{b=d'+1}^{D}\xi_b\left\|\bbtheta^{k+1-b}-\bbtheta^{k-b}\right\|^2}{\alpha^2|{\cal M}|^2}\nonumber\\
&={\rm RHS~of~\eqref{eq.trig-cond2}}
\end{align}
where (a) follows since the condition \eqref{eq.lemma4} is satisfied, so that
\begin{equation}
	\mathds{H}^2(m)\leq \frac{\xi_d}{d\alpha^2 L^2 M^2}\leq  \frac{\xi_d}{d'\alpha^2 L^2 M^2}
\end{equation}
and (b) follows from our choice of $\{\xi_d\}$ such that for $1\leq d'\leq d$, we have $\xi_d\leq \xi_{d'}\leq \ldots \leq \xi_1$ and $\left\|\bbtheta^{k+1-b}-\bbtheta^{k-b}\right\|^2\geq 0$.
Therefore, the trigger condition \eqref{eq.trig-cond2} does not activate, and the worker $m$ does not communicate with the server at iteration $k$.
With an additional step that $\|\nabla {\cal L}_m(\hat{\bbtheta}_m^{k-1})-\nabla {\cal L}_m(\bbtheta^k)\|^2\leq L_m^2\|\hat{\bbtheta}_m^{k-1}-\bbtheta^k\|^2$, we can also prove that if $\hat{\bbtheta}_m^{k-1}=\bbtheta^{k-d'}$, the trigger condition \eqref{eq.trig-cond1} does not activate either.

Note that the above argument holds for any $1\leq d'\leq d$, and thus if \eqref{eq.lemma4} holds, the worker $m$ communicates with the server at most every other $d$ iterations.

\section{Proof of Proposition \ref{prop.comm}}\label{appsec.propcvx}
The condition of communication reduction given in \eqref{eq.lemma4} is equivalent to
\begin{equation}
	\mathds{H}^2(m)\leq \frac{\xi_d}{\alpha^2 L^2 |{\cal M}|^2 d}:=\gamma_d.
\end{equation}
Together with the definition of heterogeneity score function in \eqref{eq.heter-score}, given $\gamma_d$, the quantity $h\left(\gamma_d\right)$ essentially lower bounds the percentage of workers that communicate with the server at most every other $d$ iterations; that is at most $K/(d+1)$ times until iteration $K$.

To calculate the communication complexity of LAG, we split all the workers into $D+1$ subgroups:\\
${\cal M}_0$ - every worker $m$ that does not satisfy $\mathds{H}^2(m)<\gamma_1$;\\
$\cdots$\\
${\cal M}_d$ - every worker $m$ that does satisfy $\mathds{H}^2(m)<\gamma_d$ but does not satisfy $\mathds{H}^2(m)<\gamma_{d+1}$;\\
$\cdots$\\
${\cal M}_D$ - every worker $m$ that does satisfy $\mathds{H}^2(m)<\gamma_D$.

The above splitting is according to our claims in Lemma \ref{lemma4}, which splits all the workers without overlapping. The neat thing is that for workers in each subgroup ${\cal M}_d$, we can upper bound its communication rounds until the current iteration.
Hence, the total communication complexity of LAG is upper bounded by
\begin{align}\label{eq.comm}
\mathbb{C}_{\rm LAG}(\epsilon)=&\sum_{m\in{\cal M}} {\rm Communication~rounds~of~worker}~m\nonumber\\
= &\sum_{d=0}^D ~{\rm Total~communication~rounds~of~workers~in}~{\cal M}_d\nonumber\\
= &\sum_{d=0}^D~ |{\cal M}_d|~\times ~\frac{\mathbb{I}_{\rm LAG}(\epsilon)}{d+1}\nonumber\\
\stackrel{(a)}{\leq} & 	 \left(1-h\left(\gamma_1\right)\!+\!\frac{1}{2}\Big(h\left(\gamma_1\right)-h\left(\gamma_2\right)\Big)\!+\!\ldots\!+\!\frac{1}{D+1}h\left(\gamma_D\right)\right)M~\mathbb{I}_{\rm LAG}(\epsilon)\nonumber\\
=&\Bigg(1-\underbrace{\sum_{d=1}^D\left(\frac{1}{d}-\frac{1}{d+1}\right)h\left(\gamma_d\right)}_{\Delta\bar{\mathbb{C}}(h;\{\gamma_d\})}\Bigg)M~\mathbb{I}_{\rm LAG}(\epsilon):=\left(1-\Delta\bar{\mathbb{C}}(h;\{\gamma_d\})\right)M~\mathbb{I}_{\rm LAG}(\epsilon)
\end{align}
where (a) uses the definition of subgroups $\{{\cal M}_d\}$ and the function $h(\gamma)$ in \eqref{eq.heter-score}.

\begin{figure}[t]
\vspace{-0.1cm}
\centering
\includegraphics[width=8cm]{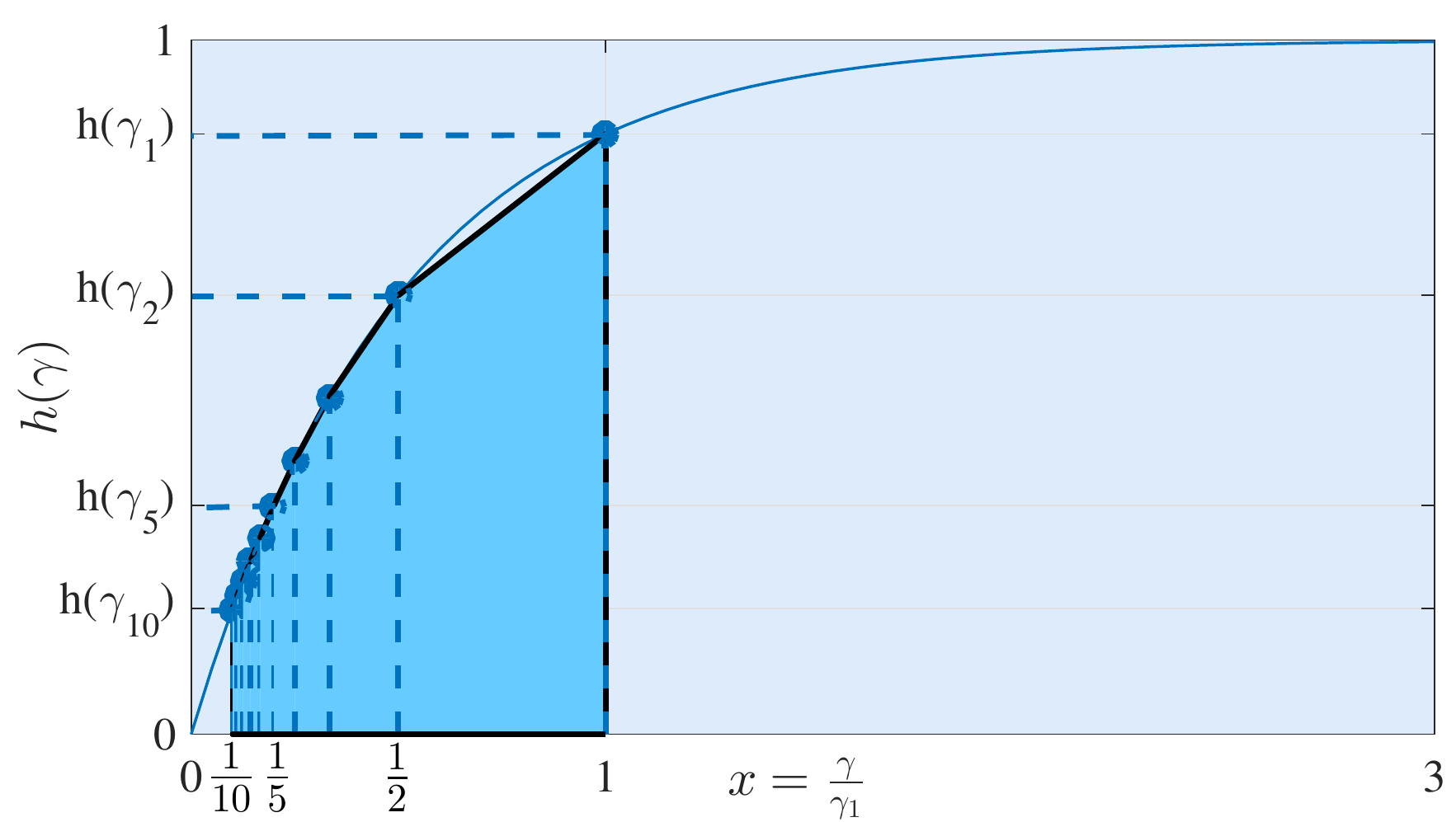}
\vspace*{-4pt}
  \caption{The area of the light blue polygon lower bounds the quantity $\Delta\bar{\mathbb{C}}(h;\xi)$ in \eqref{pf-eq.comm2-2}. It is generated according to $\gamma_d\!:={1}/{(d \gamma_1)}$ and $D\!=\!10$.}
\label{fig:hcdf}
\vspace{-0.2cm}
\end{figure}

If we choose the parameters as those in \eqref{eq.para-set1}, we can simplify the expression of \eqref{eq.comm} and arrive at
\begin{equation}\label{eq.comm2}
	\mathbb{C}_{\rm LAG}(\epsilon)\leq \left(1-\Delta\bar{\mathbb{C}}(h;\xi)\right)\frac{M\kappa}{1-\sqrt{D\xi}} \log (\epsilon^{-1})
\end{equation}
where $\Delta\bar{\mathbb{C}}(h;\{\gamma_d\})$ is written as $\Delta\bar{\mathbb{C}}(h;\xi)$  in this case, because $\gamma_d:=\frac{\xi}{(1-\sqrt{D\xi})^2 M^2 d},\,\forall d$.

On the other hand, even with a larger stepsize $\alpha=1/L$, the communication complexity of GD is $\mathbb{C}_{\rm GD}(\epsilon):=M \kappa \log (\epsilon^{-1})$.
Therefore, if we can show that
\begin{equation}
\frac{1-\Delta\bar{\mathbb{C}}(h;\xi)}{1-\sqrt{D\xi}}\leq 1~~~\iff~~~\sqrt{D\xi}\leq \Delta\bar{\mathbb{C}}(h;\xi)
\end{equation}
then it is safe to conclude that the communication complexity of LAG is lower than that of GD.
Using the nondecreasing property of $h$, we have that (cf. the area of the light blue polygon in Figure \ref{fig:hcdf})
\begin{equation}\label{pf-eq.comm2-2}
\!	\Delta\bar{\mathbb{C}}(h;\xi)\!\in\!\left[\frac{Dh(\gamma_D)}{D+1},\frac{Dh(\gamma_1)}{D+1}\right]\subseteq \left[0,\frac{D}{D+1}\right]
\end{equation}
where we use the fact that $0\leq h(\gamma)\leq 1$.
Since for any $\xi\in(0,1/D)$, there exists a function $h$ such that $\Delta\bar{\mathbb{C}}(h;\xi)$ achieves any value within $\left[0,D/(D+1)\right]$.
Therefore, we can conclude that if $\xi\leq \frac{D}{(D+1)^2}$ so that $\sqrt{D\xi}\leq D/(D+1)$, there always exists $h(\gamma)$ or a distributed learning setting such that $\mathbb{C}_{\rm LAG}(\epsilon)<\mathbb{C}_{\rm GD}(\epsilon)$.


\section{Proof of Theorem \ref{theorem1}}\label{appsec.cvx}
Before establishing the convergence in the convex case, we present a critical lemma.

 \begin{lemma}
\label{lemma3}
	Under Assumptions 1-2, the sequences of Lyapunov functions $\{\mathbb{V}^k\}$ satisfy
\begin{align}\label{eq.lemma3}
\!\!\!\!\!\!\big(\mathbb{V}^k\big)^2\!\!&\leq\!  \left(\left\|\nabla {\cal L}(\bbtheta^k)\right\|^2\!+\!\sum_{d=1}^D \beta_d\left\|\bbtheta^{k+1-d}-\bbtheta^{k-d}\right\|^2\right)\!\left(\left\|\bbtheta^k\!-\!\bbtheta^*\right\|^2\!+\!\sum_{d=1}^D \beta_d\left\|\bbtheta^{k+1-d}-\bbtheta^{k-d}\right\|^2\right)\nonumber\\
&:=\qquad\qquad\qquad\quad\overline{\mathbb{V}}^k(1)\qquad\qquad\qquad~~\times\qquad\qquad\qquad\overline{\mathbb{V}}^k(2)
	\end{align}
where $\overline{\mathbb{V}}^k(1)$ and $\overline{\mathbb{V}}^k(2)$ denote the two terms upper bounding $\left(\mathbb{V}^k\right)^2$, respectively.
\end{lemma}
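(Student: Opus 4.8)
The plan is to recognize the asserted bound as a Cauchy--Schwarz inequality applied to two well-chosen vectors in $\mathbb{R}^2$. First I would introduce the shorthand $S:=\sum_{d=1}^D \beta_d\|\bbtheta^{k+1-d}-\bbtheta^{k-d}\|^2$, which is nonnegative since $\beta_d\geq 0$, so that the Lyapunov function decomposes as $\mathbb{V}^k=\big({\cal L}(\bbtheta^k)-{\cal L}(\bbtheta^*)\big)+S$. The term ${\cal L}(\bbtheta^k)-{\cal L}(\bbtheta^*)$ is nonnegative because $\bbtheta^*$ minimizes ${\cal L}$, a fact I will use when squaring the final estimate.

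The analytic heart of the argument is to control the optimality gap by the gradient norm and the distance to the optimum. Using convexity of ${\cal L}$ from Assumption 2 followed by Cauchy--Schwarz,
\[
{\cal L}(\bbtheta^k)-{\cal L}(\bbtheta^*)\leq \big\langle \nabla {\cal L}(\bbtheta^k),\,\bbtheta^k-\bbtheta^*\big\rangle\leq \big\|\nabla {\cal L}(\bbtheta^k)\big\|\,\big\|\bbtheta^k-\bbtheta^*\big\|.
\]
Abbreviating $p:=\|\nabla {\cal L}(\bbtheta^k)\|$ and $q:=\|\bbtheta^k-\bbtheta^*\|$, this yields $0\leq {\cal L}(\bbtheta^k)-{\cal L}(\bbtheta^*)\leq pq$, and therefore $0\leq \mathbb{V}^k\leq pq+S$.

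Next I would apply Cauchy--Schwarz to the vectors $\bbu:=(p,\sqrt{S})^{\top}$ and $\bbv:=(q,\sqrt{S})^{\top}$. Their squared norms are precisely $\|\bbu\|^2=p^2+S=\overline{\mathbb{V}}^k(1)$ and $\|\bbv\|^2=q^2+S=\overline{\mathbb{V}}^k(2)$, while their inner product is $\langle\bbu,\bbv\rangle=pq+S$. Hence
\[
(pq+S)^2=\big\langle\bbu,\bbv\big\rangle^2\leq \|\bbu\|^2\,\|\bbv\|^2=\overline{\mathbb{V}}^k(1)\,\overline{\mathbb{V}}^k(2).
\]
Since both $\mathbb{V}^k$ and $pq+S$ are nonnegative with $\mathbb{V}^k\leq pq+S$, squaring is monotone here, so $(\mathbb{V}^k)^2\leq (pq+S)^2\leq \overline{\mathbb{V}}^k(1)\,\overline{\mathbb{V}}^k(2)$, which is the claim.

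I do not anticipate a genuine obstacle; the proof is short. The only two points demanding care are (i) invoking the nonnegativity of the optimality gap so that the inequality $\mathbb{V}^k\leq pq+S$ may be squared without reversing, and (ii) matching the factors $\overline{\mathbb{V}}^k(1)$ and $\overline{\mathbb{V}}^k(2)$ to $\|\bbu\|^2$ and $\|\bbv\|^2$ after identifying $p^2=\|\nabla {\cal L}(\bbtheta^k)\|^2$ and $q^2=\|\bbtheta^k-\bbtheta^*\|^2$. Should one prefer to avoid the vector packaging, the same result follows by expanding both sides and reducing to the two elementary facts $a^2\leq p^2q^2$ (from $a\leq pq$) and $2pq\leq p^2+q^2$ (AM--GM), but the two-dimensional Cauchy--Schwarz form is the most transparent.
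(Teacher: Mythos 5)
Your proof is correct and follows essentially the same route as the paper: both bound ${\cal L}(\bbtheta^k)-{\cal L}(\bbtheta^*)$ via convexity by $\langle\nabla{\cal L}(\bbtheta^k),\bbtheta^k-\bbtheta^*\rangle$ and then apply Cauchy--Schwarz, the only cosmetic difference being that the paper uses a single Cauchy--Schwarz on the concatenated $(d+D)$-dimensional vectors $\mathbf{a}^k,\mathbf{b}^k$ while you split it into two applications (first on the gradient and displacement, then on the two-dimensional vectors $(p,\sqrt{S})$ and $(q,\sqrt{S})$). Your explicit note that $\mathbb{V}^k\geq 0$ justifies the squaring is a small point the paper leaves implicit.
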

\begin{proof}
Define two vectors as
\begin{subequations}
	\begin{align}
\mathbf{a}^k:=&\left[\nabla^{\top}{\cal L}(\bbtheta^k),\sqrt{\beta_1}\left\|\bbtheta^k-\bbtheta^{k-1}\right\|,\ldots,\sqrt{\beta_D}\left\|\bbtheta^{k+1-D}-\bbtheta^{k-D}\right\|\right]^{\top}\\	 
\mathbf{b}^k:=&\left[(\bbtheta^k-\bbtheta^*)^{\top},\sqrt{\beta_1}\left\|\bbtheta^k-\bbtheta^{k-1}\right\|,\ldots,\sqrt{\beta_D}\left\|\bbtheta^{k+1-D}-\bbtheta^{k-D}\right\|\right]^{\top}.	 
\end{align}
\end{subequations}

The convexity of ${\cal L}(\bbtheta)$ implies that
	\begin{equation}\label{eq.pf-lemma3-1}
	{\cal L}(\bbtheta^k)-{\cal L}(\bbtheta^*)\leq \langle \nabla{\cal L}(\bbtheta^k), \bbtheta^k-\bbtheta^*\rangle.	
	\end{equation}
Recalling the definition of $\mathbb{V}^k$ in \eqref{eq.Lyap}, it follows that
\begin{align}\label{eq.pf-lemma3-2}
	\mathbb{V}^k&={\cal L}(\bbtheta^k)-{\cal L}(\bbtheta^*)+\sum_{d=1}^D \beta_d\left\|\bbtheta^{k+1-d}-\bbtheta^{k-d}\right\|^2\nonumber\\
&\leq \langle \mathbf{a}^k, \mathbf{b}^k \rangle\leq \|\mathbf{a}^k\|\|\mathbf{b}^k\|
\end{align}
and squaring both sides of \eqref{eq.pf-lemma3-2} leads to
\begin{align}\label{eq.pf-lemma3-3}
\!\left(\mathbb{V}^k\right)^2\!\!\leq\! \left(\left\|\nabla {\cal L}(\bbtheta^k)\right\|^2\!+\sum_{d=1}^D \beta_d\left\|\bbtheta^{k+1-d}-\bbtheta^{k-d}\right\|^2\right)\!\left(\left\|\bbtheta^k\!-\!\bbtheta^*\right\|^2\!+\sum_{d=1}^D \beta_d\left\|\bbtheta^{k+1-d}-\bbtheta^{k-d}\right\|^2\right)
	\end{align}
from which we can conclude the proof.
\end{proof}

Now we are ready to prove Theorem \ref{theorem1}.
Lemma \ref{lemma2} implies that
\begin{align}\label{eq.pf-theorem1-2}
\mathbb{V}^{k+1}-\mathbb{V}^k\leq &-\left(\frac{\alpha}{2}-\tilde{c}(\alpha,\beta_1)\left(1+\rho\right)\alpha^2\right)\!\left\|\nabla{\cal L}(\bbtheta^k)\right\|^2\nonumber\\
\!\!-&\left(\beta_D-\left(\!\tilde{c}(\alpha,\beta_1)\!\left(1+\rho^{-1}\right)\alpha^2+\frac{\alpha}{2}\right)\!\frac{\xi_D\left|{\cal M}^k_c\right|^2}{\alpha^2|{\cal M}|^2}\right)\left\|\bbtheta^{k+1-D}-\bbtheta^{k-D}\right\|^2\nonumber\\
\!\!-&\sum_{d=1}^{D-1}\!\left(\!\beta_d-\beta_{d+1}-\left(\!\tilde{c}(\alpha,\beta_1)\!\left(1+\rho^{-1}\right)\alpha^2+\frac{\alpha}{2}\right)\!\frac{\xi_d\left|{\cal M}^k_c\right|^2}{\alpha^2|{\cal M}|^2}\right)\!\left\|\bbtheta^{k+1-d}\!-\bbtheta^{k-d}\right\|^2\!\nonumber\\
\leq &-c(\alpha;\{\xi_d\})\left(\left\|\nabla {\cal L}(\bbtheta^k)\right\|^2\!+\!\sum_{d=1}^D \beta_d\left\|\bbtheta^{k+1-d}-\bbtheta^{k-d}\right\|^2\right)\nonumber\\
=&-c(\alpha;\{\xi_d\})\overline{\mathbb{V}}^k(1)
\end{align}
where the definition of $c(\alpha;\{\xi_d\})$ is given by
\begin{align}\label{eq.pf-theorem1-c}
c(\alpha;\{\xi_d\}):=\min_{k}\Bigg\{\frac{\alpha}{2}\!-\!\tilde{c}(\alpha,\beta_1)\left(1+\rho\right)&\alpha^2,1\!-\!\left(\tilde{c}(\alpha,\beta_1)\left(1+\rho^{-1}\right)\alpha^2+\frac{\alpha}{2}\right)\frac{\xi_D\left|{\cal M}^k_c\right|^2}{\alpha^2\beta_D|{\cal M}|^2},\nonumber\\
&~1-\frac{\beta_{d+1}}{\beta_d}-\left(\tilde{c}(\alpha,\beta_1)\left(1+\rho^{-1}\right)\alpha^2+\frac{\alpha}{2}\right)\frac{\xi\left|{\cal M}^k_c\right|^2}{\alpha^2\beta_d|{\cal M}|^2}\Bigg\}.
\end{align}

On the other hand, without strong convexity, we can bound $\overline{\mathbb{V}}^k(2)$ as
\begin{equation}\label{eq.pf-theorem1-3}
\overline{\mathbb{V}}^k(2):=\left\|\bbtheta^k\!-\!\bbtheta^*\right\|^2\!+\!\sum_{d=1}^D \beta_d\left\|\bbtheta^{k+1-d}-\bbtheta^{k-d}\right\|^2\leq R	
\end{equation}
where the constant $R$ in the last inequality exists since ${\cal L}(\bbtheta)$ is coercive in Assumption 2 so that ${\cal L}(\bbtheta^*)\leq {\cal L}(\bbtheta^k)<\infty$ implies $\|\bbtheta^k\|< \infty$ thus $\left\|\bbtheta^k\!-\!\bbtheta^*\right\|<\infty$ and $\left\|\bbtheta^k\!-\!\bbtheta^{k-1}\right\|<\infty$.

Plugging \eqref{eq.pf-theorem1-2} and \eqref{eq.pf-lemma1-3} into \eqref{eq.lemma3} in Lemma \ref{lemma3}, we have
\begin{equation}\label{eq.pf-theorem1-4}
	\left(\mathbb{V}^k\right)^2\leq \overline{\mathbb{V}}^k(1)\overline{\mathbb{V}}^k(2)\leq \frac{R}{c(\alpha;\{\xi_d\})}(\mathbb{V}^k-\mathbb{V}^{k+1}).
\end{equation}

Using the fact that the non-increasing property of $\mathbb{V}^k$ in Lemma \ref{lemma2}, we have that
\begin{equation}\label{eq.pf-theorem1-5}
\mathbb{V}^{k+1}\mathbb{V}^k\leq \left(\mathbb{V}^k\right)^2\leq \frac{R}{c(\alpha;\{\xi_d\})}(\mathbb{V}^k-\mathbb{V}^{k+1}).
\end{equation}
Dividing $\mathbb{V}^{k+1}\mathbb{V}^k$ on both sides of \eqref{eq.pf-theorem1-5} and rearranging terms, we have
\begin{equation}\label{eq.pf-theorem1-6}
\frac{c(\alpha;\{\xi_d\})}{R} \leq \frac{1}{\mathbb{V}^{k+1}}-\frac{1}{\mathbb{V}^k}.
\end{equation}
Summing up \eqref{eq.pf-theorem1-6}, it follows that
\begin{equation}
	\frac{K c(\alpha;\{\xi_d\})}{R} \leq \frac{1}{\mathbb{V}^K}-\frac{1}{\mathbb{V}^0}\leq  \frac{1}{\mathbb{V}^{K}}
\end{equation}
from which we can conclude the proof.

\section{Proof of Theorem \ref{theorem0}}\label{appsec.ncvx}
Lemma \ref{lemma2} implies that
\begin{align}\label{eq.pf-theorem0-0}
\mathbb{V}^{k+1}-\mathbb{V}^k\leq &-\left(\frac{\alpha}{2}-\tilde{c}(\alpha,\beta_1)\left(1+\rho\right)\alpha^2\right)\!\left\|\nabla{\cal L}(\bbtheta^k)\right\|^2\nonumber\\
\!\!-&\left(\beta_D-\left(\!\tilde{c}(\alpha,\beta_1)\!\left(1+\rho^{-1}\right)\alpha^2+\frac{\alpha}{2}\right)\!\frac{\xi_D\left|{\cal M}^k_c\right|^2}{\alpha^2|{\cal M}|^2}\right)\left\|\bbtheta^{k+1-D}-\bbtheta^{k-D}\right\|^2\nonumber\\
\!\!-&\sum_{d=1}^{D-1}\!\left(\!\beta_d-\beta_{d+1}-\left(\!\tilde{c}(\alpha,\beta_1)\!\left(1+\rho^{-1}\right)\alpha^2+\frac{\alpha}{2}\right)\!\frac{\xi_d\left|{\cal M}^k_c\right|^2}{\alpha^2|{\cal M}|^2}\right)\!\left\|\bbtheta^{k+1-d}\!-\bbtheta^{k-d}\right\|^2\!\nonumber\\
\leq &-c(\alpha;\{\xi_d\})\left(\left\|\nabla {\cal L}(\bbtheta^k)\right\|^2\!+\!\sum_{d=1}^D \beta_d\left\|\bbtheta^{k+1-d}-\bbtheta^{k-d}\right\|^2\right)
\end{align}

Summing up both sides of \eqref{eq.pf-theorem0-0}, we have
\begin{align}\label{eq.theorem0-1}
c(\alpha;\{\xi_d\})\sum_{k=1}^K\left(\left\|\nabla {\cal L}(\bbtheta^k)\right\|^2\!+\!\sum_{d=1}^D \beta_d\left\|\bbtheta^{k+1-d}-\bbtheta^{k-d}\right\|^2\right)\leq \mathbb{V}^1-\mathbb{V}^{K+1}.
\end{align}
Taking $K\rightarrow \infty$, we have that
\begin{align}\label{eq.theorem0-2}
c(\alpha;\{\xi_d\})\lim_{K\rightarrow \infty}\sum_{k=1}^K\left(\left\|\nabla {\cal L}(\bbtheta^k)\right\|^2\!+\!\sum_{d=1}^D \beta_d\left\|\bbtheta^{k+1-d}-\bbtheta^{k-d}\right\|^2\right)\leq \mathbb{V}^1
\end{align}
where the last inequality holds since the Lyapunov function \eqref{eq.Lyap} is lower bounded by $ \mathbb{V}^k\geq 0,\,\forall k$, and $\mathbb{V}^1<\infty$.
Given the choice of $\alpha$ and $\{\xi_d\}$ in \eqref{eq.beta}, the constant in \eqref{eq.theorem0-2} is $c(\alpha;\{\xi_d\})>0$, and thus two terms in the LHS of \eqref{eq.theorem0-2} are summable, which implies that
\begin{equation}
	\sum_{k=1}^{\infty}\left\|\bbtheta^{k+1}-\bbtheta^k\right\|^2<\infty
\end{equation}	
and likewise that
\begin{equation}
\sum_{k=1}^{\infty}\left\|\nabla{\cal L}(\bbtheta^k)\right\|^2<\infty.
\end{equation}	
Using the implications of summable sequences in \cite[Lemma 3]{davis2016}, the theorem follows.

\section{Proof of Proposition \ref{propncvx}}\label{appsec.propncvx}
Choosing $\beta_d:=\frac{1}{2\alpha}\sum_{\tau=d}^D\xi_{\tau}$ in the Lyapunov function \eqref{eq.Lyap}, we have
\begin{align}
\mathbb{V}^k:={\cal L}(\bbtheta^k)-{\cal L}(\bbtheta^*)+\sum_{d=1}^{D}\frac{(\sum_{j=d}^D\xi_j)}{2\alpha}\|\bbtheta^{k+1-d}-\bbtheta^{k-d}\|^2
\end{align}
Using Lemma \ref{lemma1}, we arrive at
\begin{align}
\mathbb{V}^{k+1}-\mathbb{V}^k\leq -\frac{\alpha}{2}\left\|\nabla{\cal L}(\bbtheta^k)\right\|^2\!+\left(\frac{L}{2}-\frac{1}{2\alpha}+\frac{\sum_{d=1}^D \xi_d}{2\alpha}\right)\left\|\bbtheta^{k+1}-\bbtheta^k\right\|^2.
\end{align}
If the stepsize is chosen as $\alpha=\frac{1}{L}(1-\sum_{d=1}^D\xi_d)$, we have
\begin{equation}
\mathbb{V}^{k+1}-\mathbb{V}^k\leq -\frac{\alpha}{2}\left\|\nabla{\cal L}(\bbtheta^k)\right\|^2.
\end{equation}
Summing up both sides from $k=1,\ldots,K$, and initializing $\bbtheta^{1-D}=\cdots=\bbtheta^0=\bbtheta^1$, we have
\begin{equation}
	\sum_{k=1}^K \left\|\nabla{\cal L}(\bbtheta^k)\right\|^2\leq \frac{2}{\alpha} \mathbb{V}^1=\frac{2}{\alpha} ({\cal L}(\bbtheta^1)-{\cal L}(\bbtheta^*))=\frac{2 L}{1-\sum_{d=1}^D\xi_d} ({\cal L}(\bbtheta^1)-{\cal L}(\bbtheta^*))
\end{equation}
which implies that
\begin{equation}
\min_{k=1,\cdots,K}	\left\|\nabla{\cal L}(\bbtheta^k)\right\|^2\leq \frac{2 L}{(1-\sum_{d=1}^D\xi_d)K} ({\cal L}(\bbtheta^1)-{\cal L}(\bbtheta^*))
\end{equation}
 With regard to GD, it has the following guarantees \citep{nesterov2013}
\begin{equation}
\min_{k=1,\cdots,K}\left\|\nabla{\cal L}(\bbtheta^k)\right\|^2\leq \frac{2 L}{K} ({\cal L}(\bbtheta^1)-{\cal L}(\bbtheta^*)).
\end{equation}
Thus, to achieve the same $\epsilon$-gradient error, the iteration of LAG is $(1-\sum_{d=1}^D\xi_d)^{-1}$ times than GD.
Similar to the derivations in \eqref{eq.comm}, since the LAG's average communication rounds per iteration is $(1-\Delta\bar{\mathbb{C}}(h;\{\gamma_d\}))$ times that of GD, we arrive at \eqref{prononconvex-r1}.

If we choose $\xi_1=\xi_2=\ldots=\xi_D=\xi$, then $\alpha=\frac{1-D\xi}{L}$, and $\gamma_d=\frac{\xi/d}{\alpha^2 L^2M^2}$, $d=1,\ldots,D$.
As $h(\cdot)$ is non-decreasing, if $\gamma_D\geq \gamma'$, we have $h(\gamma_D)\geq h(\gamma')$.
With the definition of $\Delta\bar{\mathbb{C}}(h;\{\gamma_d\})$ in \eqref{eq.prop5}, we get
\begin{align}
\Delta\bar{\mathbb{C}}(h;\{\gamma_d\})=\sum_{d=1}^D\left(\frac{1}{d}-\frac{1}{d+1}\right)h\left(\gamma_d\right)\geq\sum_{d=1}^D\left(\frac{1}{d}-\frac{1}{d+1}\right)h\left(\gamma_D\right)\geq\frac{D}{D+1}h(\gamma').
\end{align}
Therefore, the total communications are reduced if
\begin{align}
\Big(1-\frac{D}{D+1}h(\gamma')\Big)\!\cdot\frac{1}{1- D\xi}<1
\end{align}
which is equivalent to $h(\gamma')>(D+1)\xi$.
The condition $\gamma_D\geq \gamma'$ requires
\begin{align}\label{dayu}
\xi/D\geq \gamma'(1-D\xi)^2|{\cal M}|^2.
\end{align}
Obviously, if $\xi>\gamma' D|{\cal M}|^2$, then \eqref{dayu} holds. In summary, we need
\begin{align}\label{xiao}
\gamma'<\frac{\xi}{D M^2}<\frac{h(\gamma')}{(D+1)D M^2}.
\end{align}
Therefore, we need the function $h$ to satisfy that there exists $\gamma'$ such that \eqref{xiao} holds.

\section{Simulation Details}\label{appsec.data}
This section evaluates the performance of our two LAG algorithms in linear regression tasks with square losses and logistic regression tasks using both synthetic and real-world datasets.

\subsection{Details for linear regression}
For linear regression task, consider the square loss function at worker $m$ as
\begin{align}\label{sec6-obj1}
  {\cal L}_m(\bbtheta):=\sum_{n\in{\cal N}_m}\left(y_n-\mathbf{x}_n^{\top}\bbtheta\right)^2
\end{align}
where $\{\mathbf{x}_n, y_n,\,\forall n\in{\cal N}_m\}$ are data at worker $m$.


\noindent\textbf{Real datasets.} Performance is tested on the following benchmark datasets \citep{Lichman2013}; see a summary in Table \ref{tab:1}.

\noindent$\bullet$ \textbf{Housing} dataset \citep{harrison1978} contains $506$ samples $(\mathbf{x}_n, y_n)$ with $y_n$ representing the median value of house price, which is affected by features in $\mathbf{x}_n$ such as per capita crime rate and weighted distances to five Boston employment centers.

\noindent$\bullet$ \textbf{Body fat} dataset contains $252$ samples $(\mathbf{x}_n, y_n)$ with $y_n$ describing the percentage of body fat, which is determined by underwater weighing and various body measurements in $\mathbf{x}_n$.

\noindent$\bullet$ \textbf{Abalone} dataset contains $417$ samples $(\mathbf{x}_n, y_n)$ with $y_n$ for the age of abalone and $\mathbf{x}_n$ for the physical measurements of abalone, e.g., sex, height, and shell weight.

\subsection{Details for logistic regression}
For logistic regression, consider the binary logistic regression problem
\begin{align}\label{sec6-obj2}
  {\cal L}_m(\bbtheta):=\sum_{n\in{\cal N}_m}\log\left(1+\exp(-y_n\mathbf{x}_n^{\top}\bbtheta)\right)+\frac{\lambda}{2}\|\bbtheta\|^2.
\end{align}
where $\lambda=10^{-3}$ is the regularization constant.

 \noindent\textbf{Real datasets.} Performance is tested on the following datasets; see a summary in Table \ref{tab:2}.

\noindent$\bullet$ \textbf{Ionosphere} dataset \citep{sigillito1989} is to predict whether it is a ``good'' radar return or not -- ``good'' if the features in $\mathbf{x}_n$ show evidence of some structures in the ionosphere.

\noindent$\bullet$ \textbf{Adult} dataset \citep{kohavi1996} contains samples that predict whether a person makes over $50K$ a year based on features in $\mathbf{x}_n$ such as work-class, education, and marital-status.

\noindent$\bullet$ \textbf{Derm} dataset \citep{guvenir1998} for differential diagnosis of erythemato-squaxous diseases, which is determined by clinical and histopathological attributes in $\mathbf{x}_n$ such as erythema, family history, focal hypergranulosis and melanin incontinence.


\end{document}